% WACV 2025 Paper Template
% based on the WACV 2024 template, which is
% based on the CVPR 2023 template (https://media.icml.cc/Conferences/CVPR2023/cvpr2023-author_kit-v1_1-1.zip) with 2-track changes from the WACV 2023 template (https://github.com/wacv-pcs/WACV-2023-Author-Kit)
% based on the CVPR template provided by Ming-Ming Cheng (https://github.com/MCG-NKU/CVPR_Template)
% modified and extended by Stefan Roth (stefan.roth@NOSPAMtu-darmstadt.de)

\documentclass[10pt,twocolumn,letterpaper]{article}

\usepackage{wacv}              % To produce the CAMERA-READY version
%\usepackage[pagenumbers]{wacv} % To force page numbers, e.g. for an arXiv version

% Include other packages here, before hyperref.

% It is strongly recommended to use hyperref, especially for the review version.
% hyperref with option pagebackref eases the reviewers' job.
% Please disable hyperref *only* if you encounter grave issues, e.g. with the
% file validation for the camera-ready version.
%
% If you comment hyperref and then uncomment it, you should delete
% ReviewTempalte.aux before re-running LaTeX.
% (Or just hit 'q' on the first LaTeX run, let it finish, and you
%  should be clear).
%\usepackage{amsmath,amssymb,amsfonts}

%\usepackage[hidelinks]{hyperref}
% \def\BibTeX{{\rm B\kern-.05em{\sc i\kern-.025em b}\kern-.08em
%     T\kern-.1667em\lower.7ex\hbox{E}\kern-.125emX}}
\usepackage{amssymb}
%for proof environment
\usepackage[english]{babel}
\usepackage{amsthm}
\usepackage{amsmath,amsfonts}
\usepackage{mathtools, nccmath}
\usepackage{algorithmic}
\usepackage[ruled, noend, lined]{algorithm2e}
\usepackage{caption}
\usepackage{subcaption}
\usepackage{textcomp}
\usepackage{xcolor}
\usepackage{tabularx}
\usepackage{makecell}
\usepackage{booktabs}
\usepackage{graphicx}
\usepackage{glossaries}
\usepackage[numbers,sort & compress]{natbib}
%\setcitestyle{numbers,sort&compress}

%\renewcommand*{\glstextformat}[1]{\textcolor{Black}{#1}}
%\RestyleAlgo{ruled}
%\SetKwComment{Comment}{//}{}
\newcommand{\FW}{\texttt{DARDA}\xspace}

%%%%%%%%%%%%%%%%%%%%%%%%%%%%%%%%Boxes Start here%%%%%%%%%%%%%%%%%%%
\usepackage[many]{tcolorbox}
\usepackage{lipsum}

\definecolor{titlebg}{RGB}{100,22,72}
%\definecolor{introbg}{RGB}{0,128,128}
\definecolor{introbg}{RGB}{115,70,150}

%%%%%%%%%%%%%%%%%%%%%%%%%%%%%%%%%%%%%%%%%%%%%%%%%%%%%%%%%%%%%%%%%%%%

\usepackage[pagebackref,breaklinks,colorlinks]{hyperref}
% Support for easy cross-referencing
\usepackage[capitalize]{cleveref}
\crefname{section}{Sec.}{Secs.}
\Crefname{section}{Section}{Sections}
\Crefname{table}{Table}{Tables}
\crefname{table}{Tab.}{Tabs.}

%%%%%%%%% PAPER ID  - PLEASE UPDATE
 % *** Enter the WACV Paper ID here

%%% Source: https://tex.stackexchange.com/questions/253220/replicate-a-box-using-fancybox
\newtcolorbox{usecase}[1][]{
  breakable,
  enhanced,
  arc=0pt,
  outer arc=0pt,
  colframe=titlebg,
  colback=titlebg!05,
  overlay unbroken and first={
    \node[
      draw=titlebg,
      fill=titlebg,
      rotate=0,
      anchor=north west,
      text=white,
      font=\bfseries
    ]
    at (frame.north west)  
    {#1};
  }
}

\newtcolorbox{mission}[1][]{
  breakable,
  enhanced,
  arc=0pt,
  outer arc=0pt,
  colframe=introbg,
  colback=introbg!05,
  overlay unbroken and first={
    \node[
      draw=introbg,
      fill=introbg,
      rotate=0,
      anchor=north west,
      text=white,
      font=\bfseries
    ]
    at (frame.north west)  
    {#1};
  }
}

\newacronym{dl}{DL}{Deep Learning}
\newacronym{amc}{AMC}{Automatic Modulation Classification}
\newacronym{dnn}{DNN}{Deep Neural Network}
\newacronym{nn}{NN}{ Neural Network}
\newacronym{bn}{BN}{ Batch Normalization}
\newacronym{cnn}{CNN}{Convolutional Neural Network}
\newacronym{gap}{GAP}{Global Average Pooling}
\newacronym{fc}{FC}{Fully Connected}
\newacronym{snr}{SNR}{signal-to-noise ratio}
\newacronym{qos}{QOS}{Quality of Service}
\newacronym{5g}{5G}{Fifth Generation}
\newacronym{tl}{TL}{Transfer Learning}
\newacronym{fls}{FSL}{Few Shot Learning}
\newacronym{dg}{DG}{Domain Generalization}
\newacronym{tta}{TTA}{Test Time Adaptation}
\newacronym{iid}{IID}{independent and identically distributed}
\newacronym{uda}{UDA}{Unsupervised Domain Adaptation}
\newacronym{mse}{MSE}{mean squared error}

\newacronym{uav}{UAV}{unmanned autonomous vehicle}
\newacronym{xr}{XR}{extended reality}
\newacronym{vr}{VR}{virtual reality}
\newacronym{ar}{AR}{augmented reality}
\newacronym{cv}{CV}{computer vision}

\newacronym{ft}{FT}{Fine Tuning}
\newacronym{sfda}{SFDA}{Source-Free Domain Adaptation}
\newacronym{mac}{MAC}{multiply and accumulate}
\newtheorem{prop}{Proposition}
\usepackage{graphicx}
\usepackage{amsmath}
\usepackage{amssymb}
\usepackage{booktabs}

% It is strongly recommended to use hyperref, especially for the review version.
% hyperref with option pagebackref eases the reviewers' job.
% Please disable hyperref *only* if you encounter grave issues, e.g. with the
% file validation for the camera-ready version.
%
% If you comment hyperref and then uncomment it, you should delete
% ReviewTempalte.aux before re-running LaTeX.
% (Or just hit 'q' on the first LaTeX run, let it finish, and you
%  should be clear).
\usepackage[pagebackref,breaklinks,colorlinks]{hyperref}

% Support for easy cross-referencing
\usepackage[capitalize]{cleveref}
\crefname{section}{Sec.}{Secs.}
\Crefname{section}{Section}{Sections}
\Crefname{table}{Table}{Tables}
\crefname{table}{Tab.}{Tabs.}

%%%%%%%%% PAPER ID  - PLEASE UPDATE
 % *** Enter the WACV Paper ID here

%     \setcounter{section}{0}
%     \renewcommand{\thesection}{S\arabic{section}}
%     \setcounter{equation}{0}
%     \renewcommand{\theequation}{S\arabic{equation}}
%     \newcounter{offset}
%     \setcounter{offset}{\value{figure}}
%     \renewcommand{\thefigure}{S\the\numexpr\value{figure}-\value{offset}\relax}

%%%%%%%%% TITLE - PLEASE UPDATE

\usepackage{xr}
\makeatletter
\newcommand*{\addFileDependency}[1]{% argument=file name and extension
  \typeout{(#1)}
  \@addtofilelist{#1}
  \IfFileExists{#1}{}{\typeout{No file #1.}}
}
\makeatother

\newcommand*{\myexternaldocument}[1]{%
    \externaldocument{#1}%
    \addFileDependency{#1.tex}%
    \addFileDependency{#1.aux}%
}

\myexternaldocument{supplementary}

\begin{document}

%%%%%%%%% TITLE - PLEASE UPDATE
\title{\FW: Domain-Aware Real-Time Dynamic Neural Network Adaptation}

\author{Shahriar Rifat\\
Northeastern University\\
United States \\ 
{\tt\small rifat.s@northeastern.edu}
% For a paper whose authors are all at the same institution,
% omit the following lines up until the closing ``}''.
% Additional authors and addresses can be added with ``\and'',
% just like the second author.
% To save space, use either the email address or home page, not both
\and
Jonathan Ashdown\\
Air Force Research Laboratory\\
United States\\
{\tt\small jonathan.ashdown@us.af.mil}
\and
Francesco Restuccia\\
Northeastern University\\
United States\\
{\tt\small f.restuccia@northeastern.edu}
}
\maketitle

%\thanks{\noindent Approved for Public Release; Distribution Unlimited: AFRL-2023-XXX} 
\glsresetall

%%%%%%%%% ABSTRACT
\begin{abstract}
\gls{tta} has emerged as a practical solution to mitigate the performance degradation of \Glspl{dnn} in the presence of corruption/ noise affecting inputs. Existing approaches in \gls{tta} continuously adapt the \gls{dnn}, leading to excessive resource consumption and performance degradation due to accumulation of error stemming from lack of supervision. In this work, we propose Domain-Aware Real-Time Dynamic Adaptation (\FW) to address such issues. Our key approach is to proactively learn latent representations of some corruption types, each one associated with a sub-network state tailored to correctly classify inputs affected by that corruption. After deployment, \FW adapts the \gls{dnn} to \textit{previously unseen} corruptions in an \emph{unsupervised fashion} by (i) estimating the latent representation of the ongoing corruption; (ii) selecting the sub-network whose associated corruption is the closest in the latent space to the ongoing corruption; and (iii) adapting \gls{dnn} state, so that its representation matches the ongoing corruption. This way, \FW is more resource-efficient and can swiftly adapt to new distributions caused by different corruptions without requiring a large variety of input data. Through experiments with two popular mobile edge devices -- Raspberry Pi and NVIDIA Jetson Nano --  we show that \FW reduces energy consumption and average cache memory footprint respectively by $1.74\times$ and $2.64\times$ with respect to the state of the art, while increasing the performance by $10.4\%$, $5.7\%$ and $4.4\%$ on CIFAR-10, CIFAR-100 and TinyImagenet.
\end{abstract}

%%%%%%%%% BODY TEXT
\vspace{-.6 cm}
\section{Introduction} \label{sec:intro}

% State-of-the-art mobile devices need to perform complex tasks often based on \glspl{dnn}. For example, object detection \cite{wu2020recent} and semantic segmentation \cite{mo2022review} are needed by \glspl{uav} for surveillance purposes, as well as to avoid obstacles during navigation \cite{wang2020uav,fraga2019review}  and to continuously build detailed 3D maps \cite{ChallengesSelfDriving}. On the other hand, modern \glspl{dnn} have computational requirements that go beyond the computational capabilities of existing mobile devices. While mobile-specific \glspl{dnn} such as MobileNet~\cite{sandler2018mobilenetv2} and MnasNet~\cite{tan2019mnasnet} are significantly less complex, they come at the detriment of accuracy. For example, MobileNet loses up to 6.4\% in accuracy compared to the ResNet-152~\cite{he2016deep}.  Despite experiencing performance degradation, there is a growing interest in the deployment of DNN directly on edge devices to eliminate additional communication latency and mitigate privacy risks associated with computational offloading \cite{wu2021pecam,guo2021mistify}.

% Conversely, \textit{mobile edge computing} tackles the problem by offloading the computation of large \gls{dnn} to powerful edge computers to maintain accuracy while preserving the energy consumption of the mobile device \cite{mao2017survey}. In short, edge servers are tasked to (i) wirelessly receive the input data, (ii) perform the \gls{dnn} computation and (iii) send back the result to the mobile device. \smallskip

Traditional mobile edge computing scenarios assume that the inputs of \glspl{dnn} are received uncorrupted. However, in many real-life scenarios, sudden and unexpected corruptions (e.g., snowy or foggy conditions) can cause a drastic change in data distribution, consequently causing  performance loss \cite{sayed2012learning,shawabka2020exposing}. For example, a semantic segmentation \gls{dnn} trained with data collected in normal weather conditions has been shown to exhibit a performance loss of more than 30\% when tested in snowy conditions \cite{sakaridis2021acdc}, while an image classification \gls{dnn} can experience a similar decrease in the case of reduced lighting conditions \cite{hendrycks2019benchmarking}.

\begin{figure}[h!]
	\centering
	\includegraphics[width=\columnwidth]{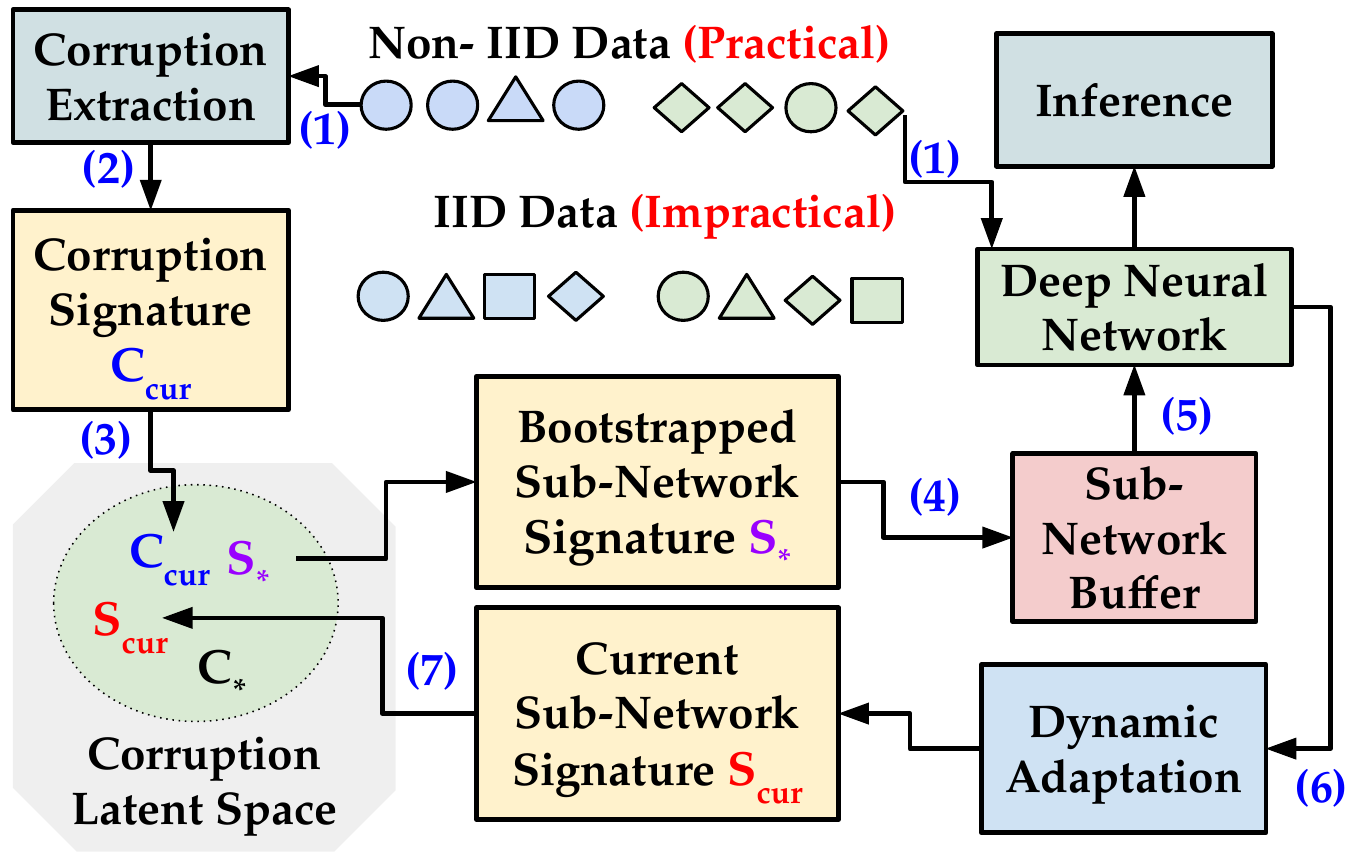} 
	\caption{Overview of the proposed \FW framework.} \vspace{-.2 cm}
	\label{fig:concept}

\end{figure}
Test-time Adaptation (\gls{tta}) tackles this issue by adapting the \gls{dnn} with unlabeled test data in an online manner, thus handling distributional shifts in real time. Existing \gls{tta} methods lose performance when encountering continuously changing distributions with highly correlated input samples \cite{wang2020tent,wang2022continual}. This assumption is true in many real-world scenarios. For example, an \gls{uav} monitoring an outdoor environment will likely encounter similar classes as video feeds are very likely to be highly correlated when considering limited time spans. Continuous adaptation of an edge deployed \gls{dnn} to such a challenging yet practical scenario causes many adaptation methods to fail. Moreover, existing methods lack awareness of when the domain shift happens, thus they continuously fine-tune the \gls{dnn} even if there is no shift in data distribution. However, in a real-life deployment scenario, certain data distribution might persist for a certain period of time (e.g., a bright sunny day). This imposes unnecessary burden on energy consumption and cache memory --  without yielding performance improvements.

To address the critical issues defined above, we propose a new  framework named \textit{Domain-Aware Real-Time Dynamic Adaptation} (\FW), shown in \cref{fig:concept}. We now provide a step-by-step walk-through of \FW. First, the data stream can shift due to some corruption which is also correlated in label space (distribution of labels are not uniform) (step 1). The corruption process is then detected and extracted (step 2), resulting in a latent representation (step 3), which we call \textit{signature} for brevity. Then, the corruption signature $ C_{cor}$ is assigned to the closest \textit{corruption centroid} $C_{*}$. Each centroid is learned during training and represents a known corruption type. Moreover, it is associated with a ``bootstrapped'' sub-network of the main \gls{dnn} that is specifically tailored to the specific corruption. The sub-network signature $S_{*}$  is then used to retrieve the actual sub-network structure and weight (step 4), which is then immediately plugged into the DNN (step 5). Next, the \gls{dnn} is updated to match the type of ongoing corruption (\textit{not seen during the subnetwork and corruption signature training phase}) (step 6) by ``moving'' the current subnetwork signature $S_{cur}$ closer to the actual ongoing corruption $C_{cur}$ (step 7).

% \noindent \textbf{Key Advantages.}~\FW avoids continuously adapting the \gls{dnn} to the current corruption type, and instead learns a  \textit{corruption latent space} that allows it to quickly match the ongoing corruption to certain extent. Second, \FW is able to adapt to \emph{unseen} corruptions with a \textit{limited} number of  \textit{unlabeled} inputs. Third, as shown in \cref{fig:forgetting}, \FW does not incur catastrophic forgetting as it is able to keep the accuracy within 1\% given by the original \gls{dnn}, while other methods decrease accuracy by up to about 50\%. \vspace{-0.2cm}

% \vspace{-.2cm}
% \begin{figure}[h]
% 	\centering
% 	\includegraphics[width=\columnwidth]{fig/Covariate & Prior Shift.pdf} 
% 	\caption{Example of Label Distribution \& Corruption Shift.}
% 	\label{fig:prior & covrarite shift}

% \end{figure}
\vspace{-0.2cm}
\subsection*{\textbf{Summary of Novel Contributions}}
$\bullet$ We propose Domain-Aware Rapid Dynamic Adaptation (\FW) to seamlessly and effectively adapt in real-time state-of-the-art \gls{dnn} to unseen corruptions (Section \cref{sec:fw}). The key innovation of \FW is a brand-new approach to learn a latent space putting together the corruption process and the state of the \gls{dnn}, which is done through a \textit{corruption extractor} (Section  \cref{sec:corruption-extractor}), a corruption encoder (Section  \cref{sec:corruption-encoder}) and  a \textit{sub-network encoder} (Section \cref{sec:model-encoder}), which together make \FW able to seamlessly adapt the \gls{dnn} with \textit{unlabeled} inputs. The implementation is available at: \href{https://github.com/shahriar-rifat/DARDA.git}{darda repository}. \smallskip   

$\bullet$ We prototype \FW and evaluate its performance against five state of the art approaches, namely BN  \cite{nado2020evaluating}, TENT \cite{wang2020tent}, CoTTA \cite{wang2022continual}, NOTE  \cite{gong2022note}, RoTTA \cite{yuan2023robust} on the ResNet-56 \gls{dnn} trained with the CIFAR-10 and CIFAR-100 datasets augmented with same known corruption types that are considered to be known prior to deployment. We show that \FW improves the performance by 10.4\% and 5.7\%  on CIFAR-10 and CIFAR-100 respectively, while performing 29\% less forward computation and 77\% less backward passes during the adaptation process and with only 16.57\% of additional memory. \smallskip

$\bullet$ We implement \FW on Jetson-Nano and Raspberry Pi 5, commonly used to exhibit efficiency of edge-deployed DNNs \cite{wen2023adaptivenet,han2021legodnn}. Experiments show that \FW handle distribution shifts while being $1.74\times$ more energy efficient than the best-performing state of the art \gls{tta} algorithm. For adaptation and inference task \FW takes $7.3\times$ less time per sample while it occupies $2.64\times$ less cache memory. \vspace{-0.3cm}

\section{Related Work and Existing Issues}

Some existing works \cite{schneider2020improving,gong2022note} on \gls{tta} have provided empirical evidence of performance improvement by only re-estimating the normalization statistics of \gls{bn} layers from test data. The absence of supervision is typically covered by two unsupervised forms of losses. Firstly, a line of work \cite{wang2021tent,niu2022efficient, goyal2022test} minimizes the entropy of the predictions over a batch of data to prevent the collapse of a trivial solution. Invariance regularization-based \gls{tta} algorithms perform some data augmentation (e.g. rotation \cite{wang2022continual}, adversarial perturbation \cite{nguyen2023tipi}) on test data during inference. The inconsistency of the prediction of \gls{dnn} on different augmented test data is leveraged as an unsupervised loss function to update the learnable parameters during inference. The proposed \FW framework uses cross-modal learning to acquire a shared representation space between the corruption space and the \gls{dnn} space \cite{radford2021learning, zhu2022multimodal}. However, to our knowledge, none of the existing research addresses cross-modal learning between the corruption process and the state of the \gls{dnn} model.  Next, we discuss into some practical limitations of \gls{tta} in edge vision application. \smallskip

\noindent \textbf{Excessive Resource Consumption.}~To improve performance, existing \gls{tta} approaches typically involve continuous adaptation even with uncorrupted input samples, thus imposing a heavy burden on edge resources. Ideally, adaptation should be only performed upon changes in the corruption process, thereby conserving constrained resources such as energy and processing power at the edge. Despite the potential benefits, current methods have yet to explore this direction. \cref{fig: motiv 1} shows a significant increase in resources between inference-only and existing \gls{tta} approaches in Jetson Nano. Specifically, the energy consumption increases by up to $11.9\times$ , along with a $6.8\times$ increase in CPU latency and a $3.1\times$ increase in GPU latency. 

\vspace{-.2cm}
\begin{figure} [h]
        \centering
        \begin{subfigure}[b]{0.21\textwidth}
                \centering
                \includegraphics[width=\textwidth]{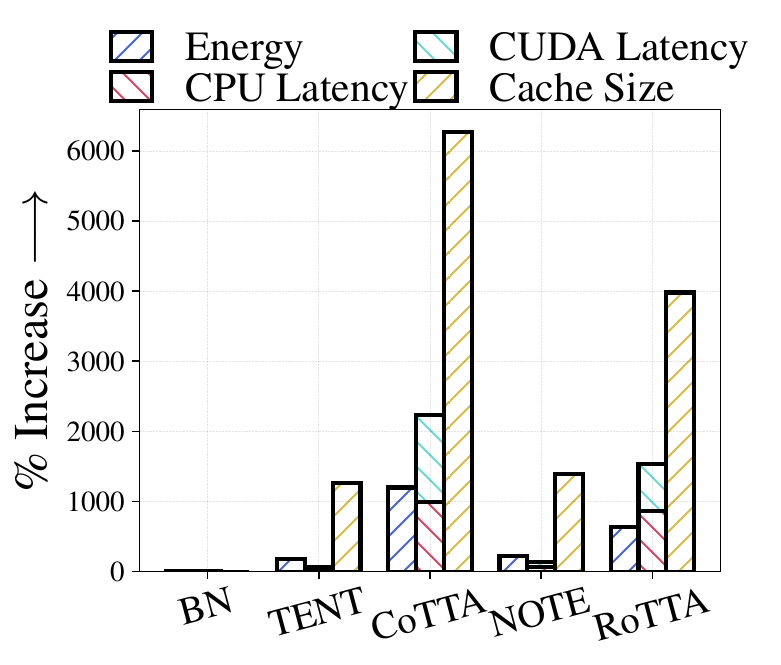}
                \caption{Increase in resource consumption with respect to inference-only execution}
                \label{fig: motiv 1}
        \end{subfigure}
        \begin{subfigure}[b]{.21\textwidth}
                \centering
                \includegraphics[width=\textwidth]{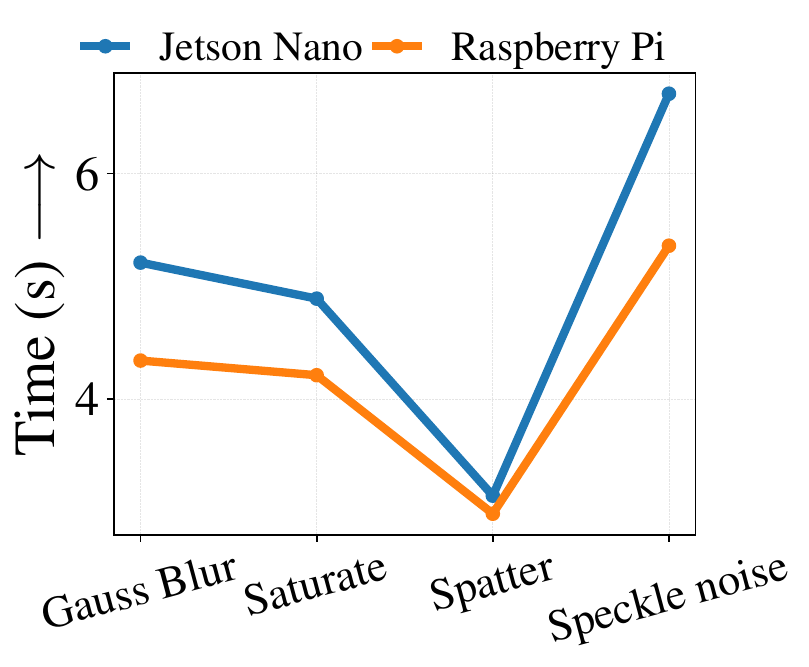}
                \caption{Adaptation latency for Jetson Nano and Raspberry Pi for different corruption types}
                \label{fig: motiv 2}
        \end{subfigure}
        \caption{Current issues of Test-time Adaptation. }\label{fig:edge efficiency}
        
\end{figure}
\vspace{-.3 cm}

\noindent \textbf{Higher Average Cache Usage.}~Mobile edge devices have limited memory size. As such, it is compelling to ensure \gls{tta} uses minimal memory footprint. Specifically, during inference, various blocks of a \gls{dnn} are executed sequentially, and the cache memory usage at any given time is bounded by the data size of the activation map of the block being computed. However, as the \gls{tta} dynamically updates the \gls{dnn}, local gradients of learnable parameters with respect to intermediate activation are stored, a quantity that scales with both the \gls{dnn} size and the number of learnable parameters. From \cref{fig: motiv 1}, we observe that the average cache usage increases by up to $21.7\times$ compared to a \gls{dnn} deployed for inference only. \smallskip

\noindent \textbf{Dependence on Sample Diversity}.~To be effective in real-world scenarios, the dynamic adaptation of a \gls{dnn} needs to happen in a rapid fashion. However, Figure \cref{fig: motiv 2} shows that existing state of the art work \cite{yuan2023robust} takes significant time to restore the performance of the original \gls{dnn} when the noise condition keeps changing. The reason behind such behavior lies in  the inherent dependency on sample diversity of \cite{yuan2023robust}. In other words, if diverse samples are not observed with a new corruption process, the work \cite{yuan2023robust} cannot achieve its optimum performance gains.\vspace{-0.2cm}

% The proposed \FW framework takes a new direction and uses cross-modal learning to acquire a shared representation space between the corruption space and the \gls{dnn} space \cite{radford2021learning, zhu2022multimodal}. However, to the best of our knowledge, none of the existing research addresses cross-modal learning between the corruption process and the state of the \gls{dnn} model. 

\section{Problem Statement} \label{motivation}

We define $d_s$ as the number of available learning domains, each characterizing a different imperfection and/or corruption type. We further define the related set of $d_s$ datasets where $x_i^d$ and $y_i^d$ indicates $i^{th}$ data sample and label from domain d respectively, as 

\begin{equation}\mathcal{D}^{d} = \left \{ \left( x_i^{d}, y_i^{d}\right)\right \}_{i=1}^{n_{d}} \mbox{, with } 0 \le d < d_s. 
\end{equation} 

%  \begin{figure}[h]
% 	\centering
% 	\includegraphics[width=0.9\columnwidth]{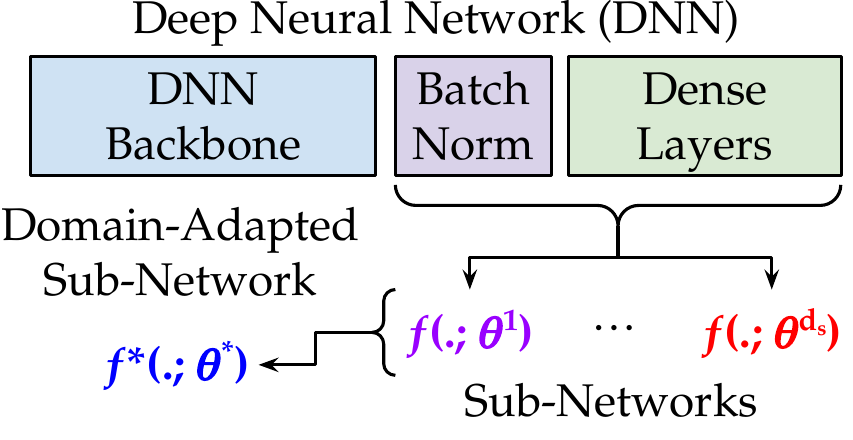}
% 	\caption{Domain-Adapted Sub-Networks.}

Each dataset $\mathcal{D}^{d} $ is composed of $n_{d_s}$ \gls{iid} samples characterized by some probability distribution $\mathit{P}^d(X,Y) $ where X and Y are random variables of input and output respectively. We assume a \gls{dnn} has been trained on an uncorrupted dataset and $d_s$ sub-networks $f(.; \theta_d)$ are created so that (i) their architecture includes the batch normalization layer and the dense layers of the \gls{dnn}; (ii) their weights $\theta_d$ are obtained by fine-tuning each sub-network to each specific domain. We assume continuous and correlated (thus, non-\gls{iid}) data flow to the \gls{dnn} in real time, coming from $d_u$ unknown domain datasets $\mathcal{D}^u \mbox{, with } 0 \le u < d_u$. By unknown we mean $\mathit{P}^u(X,Y) \neq \mathit{P}^d(X,Y)$, for all ($0 \le d < d_s$ , $0 \le u < d_u$). We define a \textit{domain latent space} $\mathcal{O} \subset \mathbb{R}^o$, where $o$ is the dimension of the latent space.  Our goal is to (i) sense when the data flow has changed domain from the current domain $d$ to the unknown domain $u$; (ii) infer  domain $t$ that is closest to $u$ in the latent space; (ii) select the related fine-tuned sub-network $f(\cdot; \theta_t)$ to quickly recover performance, and (iii) adapt $f(\cdot;\theta_t)$ so as to find optimal $f^{*}(.; \theta^*)$ such that:

\vspace{-.3cm}
\begin{equation}
   \theta_t^* =  \operatorname*{arg\ min}_{\theta_t} \frac{1}{n_u}\sum_{i=1}^{n_u}\mathcal{L}\left\{f\left(x_i^{u}; \theta_t \right); y_i^{u}\right\}
\end{equation}
%\end{mission}

\noindent where $n_u$ is a given number of samples in the unknown domain. Such samples are assumed to be available sequentially and the distribution of labels is different from the current domain's, i.e., $\mathit{P}^u(Y) \neq \mathit{P}^d(Y)$. Notice that ground-truth labels $y_i^{u}$ are usually not available in real-world settings and are only used for performance evaluation.

\section{Description of \FW Framework } \label{sec:fw}
\vspace{-.1cm}
The main components of \FW are a corruption extractor (Section \cref{sec:corruption-extractor}), a corruption encoder (Section \cref{sec:corruption-encoder}) and a model encoder (Section \cref{sec:model-encoder}), a new corruption-aware memory bank (Section \cref{sec:memory-bank}), new batch normalization scheme (Section \cref{sec:batch-normalization}) and a new real-time adaptation module (Section \cref{sec:real-time-adaptation}).

 \begin{figure}[h]
	\centering
	\includegraphics[width=0.9\columnwidth]{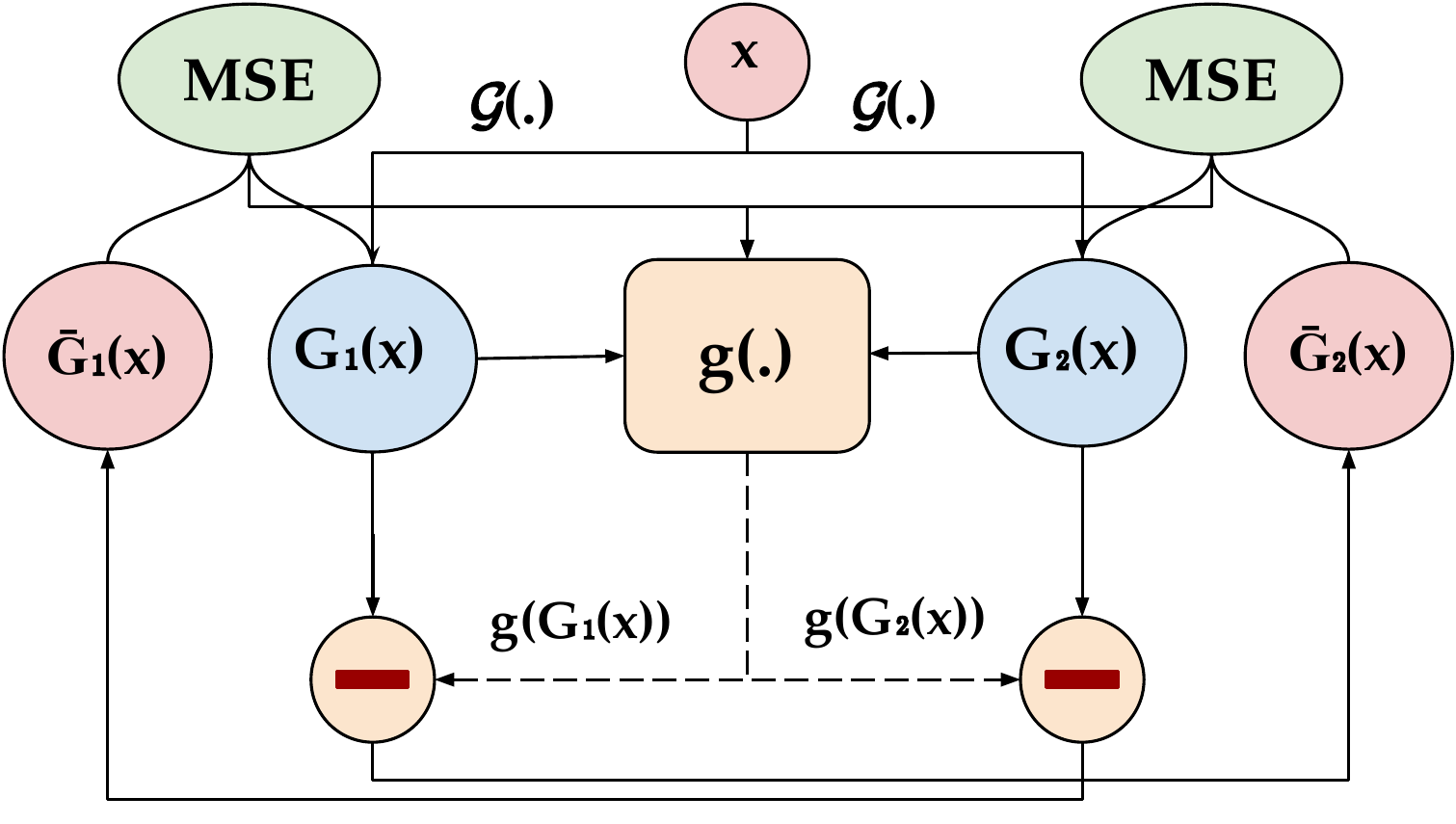}
	\caption{Proposed Corruption Extractor.}
	\label{fig:noise_residual}
\vspace{-0.3cm}
\end{figure}
\subsection{\FW Corruption Extractor}\label{sec:corruption-extractor}

\cref{fig:noise_residual} shows our proposed corruption extraction approach. Our key intuition is that features related to corruption and semantic features for inference are tightly intertwined. Since decoupling these features is difficult without corresponding clean samples, we design a process to decouple corruption features without corresponding clean sample. Specifically, we learn the corruption features by mapping corrupted data to a different corrupted version of the same data \cite{huang2021neighbor2neighbor, mansour2023zero}. For a given corrupted data $x$ we downsample it through two convolution kernels with static filters $G_1(.) = [[0,0.5],[0.5,0]]$ and $G_2(.) = [[0.5, 0],[0,0.5]]$ to generate two downsampled version of the corrupted data. From the first downsampled corrupted data $\mathbf{G}_{1}(x)$, we try to create an exact copy of the other downsampled data $\mathbf{G}_{2}(x)$ by subtracting some residual information learned by passing $\mathbf{G}_{1}(x)$ through the corruption extractor $g(\cdot)$. We denote this predicted copy as $\mathbf{\tilde{G}}_{2}(x)$. Similarly, we compute $\mathbf{\tilde{G}}_{1}(x)$ from $\mathbf{G}_{2}(x)$. The mapping functions are as follows: 

% We use a \gls{cnn} $\mathbf{f_{\theta_n}(.)}$ with parameters $\theta_n$ to learn some residual value which can be discarded to map one downsampled version of corrupted data $\mathbf{G}_{1}(x)$ to another one as $\mathbf{G}_{2}(x)$ as shown in Figure \cref{fig:noise_residual} by subtracting the noise residual using the following equations:  

\begin{equation} \label{eqn:g(x)}
    \tilde{\mathbf{G}_{2}}(x) = \mathbf{G}_{1}(x) - g(\mathbf{G}_{1}(x))
\end{equation}
\vspace{-.5cm}
\begin{align}
    \tilde{\mathbf{G}_{1}}(x) = \mathbf{G}_{2}(x) - g(\mathbf{G}_{2}(x))
\end{align}

In \cref{fig:noise_residual}, the extracted residual is denoted by dotted lines. The parameters of $g(\cdot)$ can be optimized by minimizing the following loss function, which is the loss \gls{mse} indicated in \cref{fig:noise_residual}.

\begin{equation} \label{eqn:Ln}
\begin{split}
   \mathcal{L}_{N} = & \sum_{c=1}^{d_s}\sum_{i=1}^{n_{d_s}}\frac{1}{2}(\left \| \tilde{\mathbf{G}_{2}}(x_{c,i})-\mathbf{G}_{2}(x_{c,i}) \right \|_{2}^{2} + \\ \vspace{-.5cm}
  & \left \| \tilde{\mathbf{G}_{1}}(x_{c,i})-\mathbf{G}_{1}(x_{c,i}) \right \|_{2}^{2})
   \end{split}
\end{equation}
The intuition behind our approach is that the pixel values of the uncorrupted data in close proximity are usually highly correlated. Therefore, two downsampled versions of the data would be almost the same since they were generated by averaging values in close proximity. The corruption process breaks this correlation. Thus, our extractor $g(\cdot)$ learns to extract a representation of the corruption to generate uncorrupted data. We design this loss function in \cref{eqn:Ln} to create an opposite  dynamics that enable us to extract information about the corruption process. We have theoretically proven through \cref{prop:prop_1} that, for additive noise our proposed approach indeed learns to extract information about the corruption. In \cref{table:Noise Type Comparison}, it is empirically verified that the proposed corruption extractor is useful for different kinds of corruption in general.

\subsection{\FW Corruption Encoder}\label{sec:corruption-encoder}

We use the corruption-related features to detect a corruption shift in real time. Specifically, we use a corruption encoder $ h(\cdot)$ to encode corruption information of the input data from the known corruption types into a projection in the corruption latent space. While generating the latent space, we  ensure that samples from the same corruption distribution are grouped together in that latent space and samples from different corruption distribution are located distant from each other. For the N samples $\left \{ C^i, D^i\right \}_{i=1}^N $ in a training data batch, we define $\mathcal{C}$ as the set of each corruption projection $C^i$ in the latent space. We also define as $D^i$ as the corruption label for projection $C^i$, and as $\mathcal{D}$ the corresponding set. We define the following supervised contrastive loss function for a batch of training data:

\begin{equation} \label{eqn:supcon}
    \mathcal{L}_D(\mathcal{C}, \mathcal{D}) = \sum_{i=1}^{2 \cdot N}\mathcal{L}_D^{i}(\mathcal{C},\mathcal{D})
\end{equation}
where $\mathcal{L}_D^{i}(\mathcal{C},\mathcal{D})$ is defined as

\begin{equation} \label{eqn:supcon_2}
\begin{split}
\mathcal{L}_D^{i}(\mathcal{C},\mathcal{D}) = &
    \frac{-1}{2\cdot n_{d_s}-1}\sum_{j=1}^{2\cdot N}\mathbf{1}_{(i\neq j)\&(D^{i}\neq D^{j})} \\ \times & \log\frac{exp(C^i\mathbf{.}C^j/\tau)}{\sum_{k=1}^{2\cdot N}\mathbf{1}_{(k\neq i)}exp(C^i\mathbf{.}C^k/\tau)}   
    \end{split}
\end{equation}
where $N$ represents the total number of samples in the batch and $\tau$ is a scaling parameter. While training the corruption encoder, we generate a soft augmentation (random rotations and flips) from each data sample to have more samples from each noise classes. This way, our training batch size becomes $2 \cdot N$. For each sample $i$ in our training batch, we calculate its contrastive loss  using Eq. \cref{eqn:supcon_2}. Here, the numerator enforces cosine similarity between similar corruption types and the denominator penalizes high similarity between projections which are from different corruption class. Thus, similar corruption projections are positioned closer and dissimilar ones are positioned far apart.  We jointly train the corruption extractor $g(.)$ and corruption encoder $h(.)$  by minimizing the loss function:

\begin{equation}\label{eq:training_corruption_encoder}
    \mathcal{L}=\mathcal{L}_D + \lambda_{e}\cdot \mathcal{L}_N
\end{equation} 

where $\lambda_{e}$ is a constant which does not impact performance yet makes the convergence of $g(.)$ and $h(.)$ faster. The  training process is described in \cref{alg:corruption-encoder}.

\subsection{\FW Sub-Network Encoder}\label{sec:model-encoder}

% As the trained noise-encoder is demonstrated to be capable enough to project samples from an unknown distribution closer to the most similar known corruption distribution,  we can simply assign a unique number to each model corresponding to the corruption distribution it is optimized for, and consequently retrieve the model from the buffer to instantly have some stable performance gain as models trained on similar distribution gives performance boast than the model trained on different corruption distribution. We can also fine-tune the model using the oncoming samples it observes during its inference time using existing methods like entropy minimization\cite{wang2020tent}.  

% % However, as discussed previously, such loss functions are not well calibrated and do not guarantee stable performance gain.
To guide the adaptation of the sub-network,  we need to obtain a ``fingerprint'' of the current sub-network, whose state space is by definition continuous and infinite. We address this issue by creating a set of unique fingerprints $F^1 \cdots F^{d_s}$ of each sub-network by feeding a fixed Gaussian noise into the \gls{dnn} and consider its output response vector as the fingerprint of the sub-network.  Our intuition is that since a \gls{dnn} works as a non-linear function approximator, it will produce a different output for the same input with different parameters. We generate a signature $S^{d}$ of each sub-network from each fingerprint $F^d$ as $S^{d} = \mathcal{S}(F^d; \psi)$ where $\mathcal{S}: F \rightarrow \mathbb{R}^o$ which is a shallow neural network parameterized by $\psi$ that maps the fingerprint to the corruption latent space. The $\mathcal{S}$ encoder is trained so as to minimize the following loss function, which maximizes the cosine similarity between the latent space projections: 
\begin{equation}
    \mathcal{L}_{CM} = \sum_{i=1}^{d_s}\sum_{j=1}^{d_s} \mathbf{1}_{i=j} \left\{exp(-\,S^i\,\mathbf{.}\, C^j)\right\}
\end{equation} 

Here, $\mathbf{1}(.)$ is the indicator function. We use a regularization term in addition to $\mathcal{L}_{CM}$ to distribute encoder $\mathcal{S}$'s projection in regions from where sub-network's projections would produce well-performing sub networks. The measured cosine similarity between a sub-network signature $S^i$ and a corruption signature $C_j$ is converted into probability distribution $\pi_{ij}$ using: 
\begin{equation}
    \pi_{ij} = \sigma \left(\frac{S^i \cdot C^j}{\sum_{k=1}^{d_s} S^k \cdot  C^k}\right), i,j \in [0, d_s]
\end{equation}
where $\sigma$ is the softmax function. If $a_{ij}$ indicates accuracy of sub-network $i$ in corruption domain $j$ we can generate a probability distribution $\alpha_{ij}$ such as
\begin{equation}
    \alpha_{ij} =\sigma \left( \frac{\log 1/(1 - a_{ij}) }{\sum_{k=1}^{d_s}\log 1/(1-a_{ik}) }\right), i,j \in [0, d_s]
\end{equation}
% to generate projections that have affinity with other corruption domains where it can perform well 
We can calibrate the sub-network encoder to to generate projections that have affinity with other corruption domains where it can perform well by minimizing the KL divergence between $\pi_{ij}$ and $\alpha_{ij}$. The regularization term $\mathcal{L}_{r}$ and the loss $\mathcal{L}_{m}$ that the sub-network encoder is trained on are
\begin{equation}
    \mathcal{L}_{r} = \sum_{i=1}^{d_s}\sum_{j=1}^{d_s} \log \frac{\pi_{ij}}{\alpha_{ij}}
\end{equation}
\begin{equation}
    \mathcal{L}_m = \mathcal{L}_{CM} \, + \, \lambda_r\,\mathcal{L}_r,
\end{equation}
where $0 < \lambda_r < 1$ is a regularization parameter. 

% In a well calibrated latent space (prediction closely quantifies the actual probability), the similarity in the space can be used as a surrogate of its performance and provide reliable gradient flow to fine tune the model in unsupervised manner. To do so, we train the model encoder to generate its projection a trained sub networks closest to the corruption it was trained on. At the same time we want our model encoder not to be too overconfident and make it projection probability distributed in accordance to its performance in the latent space. 

\subsection{Corruption-Aware Memory Bank}\label{sec:memory-bank}

In practical scenarios, the distribution of labels differs from the actual label distribution. Importantly, while during training, the \gls{dnn} is given input data with \gls{iid} labels, in real-world scenarios sequential data is highly correlated while other classes are very scarce at a particular time. Adaptation to this unreliable label distribution leads to substantial performance loss in traditional approaches, as shown in  Section \cref{sec:results}. To address this problem, we need to have a stable snapshot of the ongoing corruption at inference time. Thus, we create and maintain a memory bank $\mathcal{M}$ with $\mathcal{N}$ slots to store samples. We construct the memory bank in a label-balanced manner. Recalling that $Y$ is the set of labels, for each class $y\in Y$, we store $\frac{\mathcal{N}}{\mid Y\mid}$ number of incoming test samples. As we do not have labeled data, the labels are inferred from the prediction $\hat{y}$ of the model. However, sampling based on prediction of continuously adapted model leads to error accumulation \cite{yuan2023robust}.

To solve error accumulation, existing methods \cite{yuan2023robust,wang2022continual} resort to inference with multiple \glspl{dnn} by feeding different augmented views of test samples to them. However, this requires additional computation and memory for multiple inference. Although sensing the corruption and bootstrapping with proper sub-network signature leads to reliable memory bank construction, we store the samples that are only representative of the ongoing corruption. For each incoming test samples we predict its class label, and store it in the memory bank if we have room for that particular class and if it is highly representative of the ongoing corruption. The process of memory bank construction is described in \cref{alg:memory bank}.

\begin{figure*}[t]
        \centering
        \begin{subfigure}[b]{0.32\textwidth}
                \includegraphics[width=\linewidth]{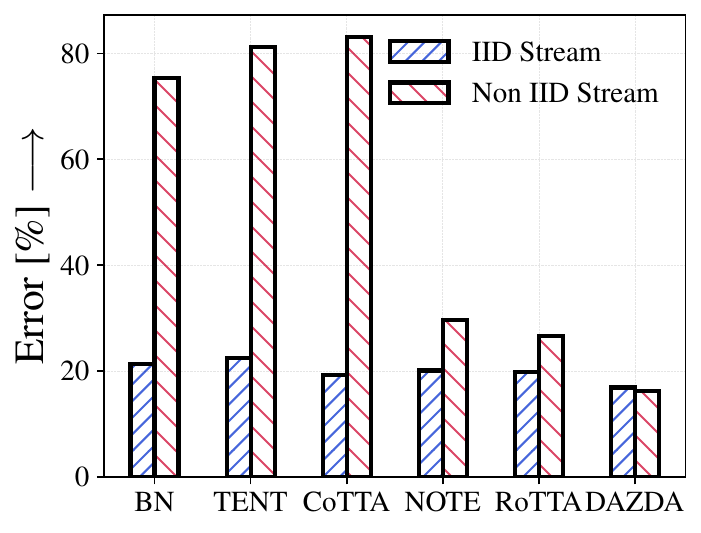}
                \caption{CIFAR-10 }
                \label{fig: cifar10_performance}
        \end{subfigure}%
        \begin{subfigure}[b]{0.32\textwidth}
                \includegraphics[width=\linewidth]{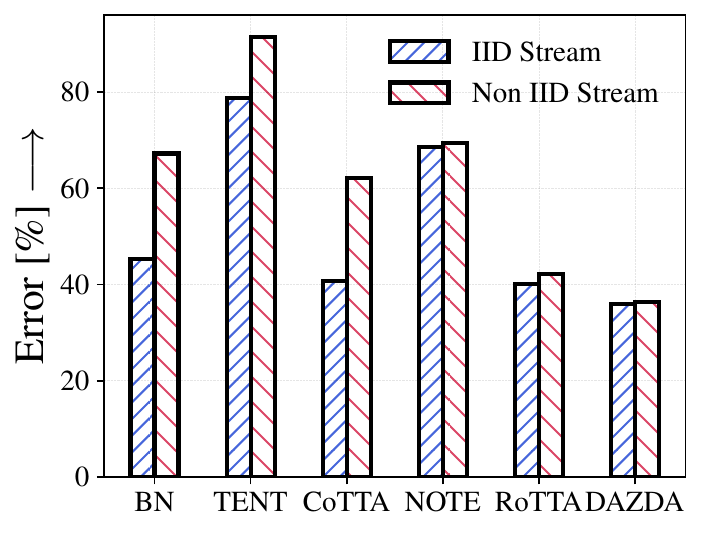}
                \caption{CIFAR-100 }
                \label{fig: cifar100_performance}
        \end{subfigure}%
        \begin{subfigure}[b]{0.32\textwidth}
                \includegraphics[width=\linewidth]{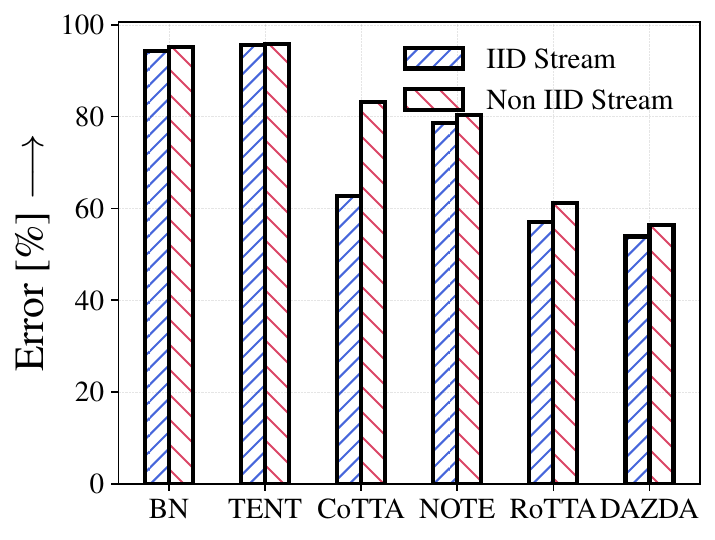}
                \caption{Tiny Imagenet }
                \label{fig: tiny_imagenet_performance}
        \end{subfigure}
         \setlength\abovecaptionskip{-0.05cm}
        \caption{Performance Comparison in Different Popular Corruption Benchmark Datasets (Unseen Corruptions). \vspace{-0.4cm} }\label{fig:performance}
\end{figure*}

\subsection{Corruption-Aware Batch Normalization}\label{sec:batch-normalization}
Due to sudden corruption, the normalization statistics $(\bar{\mu}, \bar{\sigma}^2)$ estimated on uncorrupted training data become unreliable. Although using a specific sub-network with normalization statistics $(\mu_s, \sigma_s^2)$ would be feasible, we can use  the samples stored in the memory bank to further refine our estimation of the ongoing statistics $(\mu_t, \sigma_t^2)$. To this end, we use \gls{bn} \cite{ioffe2015batch}.  Let  $A^{l}\,\epsilon \,\mathbb{R}^{B\times Ch^l\times N^l}$ be a batch of activation tensors of the $l^{th}$ convolutional layer, where $B$ corresponds to the batch size, $Ch^l$ denotes the number of channels in $l^{th}$ layer and $N_{l}$ is the dimension of activations in each channel. A \gls{bn} layer first calculates $\mu_{Ch} =\frac{1}{\left | B \right |\left | N^l \right |}\sum _{b\epsilon B, \:n\epsilon N^l}(A^l)$ and $\sigma_{Ch} =\frac{1}{\left | B \right |}\sum _{b\epsilon B }(A^l \:-\mu_{Ch})^{2}$ and subtracts $\mu_{Ch}$ from all input activations in the channel. Subsequently, \gls{bn} divides the centered activation by the standard deviation $\sigma_{Ch}$. Normalization is applied:
\vspace{-.2cm}
\begin{equation}
BN\left ( A^l_{b,Ch,N^l} \right ) \leftarrow \gamma \times \frac{A^l_{b,Ch,N\,^l}-\mu_{Ch}}{\sqrt{\sigma_{Ch}^2 + \epsilon}} \:+ \beta \;\;\;\;\forall \; b,Ch,N^l
\end{equation}
Here, $\gamma$ and $\beta$ are the affine scaling and shifting parameters followed by normalization, while $(\epsilon > 0)$ is a small constant added for numerical stability. The normalized and affine transformed outputs are passed to the next $(l+1)^{th}$ layer, while the normalized output is kept to the $l^{th}$ layer. \gls{bn} also keeps track of the estimate of running mean and variance to use during the inference phase as a global estimate of normalization statistics, and $\gamma$ and $\beta$ are optimized with the other \gls{dnn} parameters through back propagation. 

Whenever the corruption changes, the projection from the corruption encoder matches with the closest corruption centroid in the latent space. At the same time, samples affected by the new corruption are being stored in the memory bank. As we constrict the memory bank to have certain amount of samples from a particular class, due to our design of non \gls{iid} real world data stream, initially there will be samples from previous corruption distribution also on the memory bank. Fig. \cref{fig:corruption tsne} shows that the projections for even the unknown corruption get clustered nearby in the latent space. When the samples of the memory bank become representative of the current corruption, they should have low variance among their cosine similarity with current closest corruption centroid $C_{cur}$. Therefore, when the change in corruption is detected and the variance of cosine similarity from the centroid becomes lower than  $\varphi_{thresh}$, current \gls{dnn} normalization statistics are updated as:
\begin{equation} \label{eq: ca BatchNorm}
    \begin{split}
    \mu_s = (1-m)\cdot \mu_s \,+\, m \cdot \hat{\mu}^t\\
    \sigma_s^2 = (1-m)\cdot \sigma_s^2 \,+\, m\cdot \hat{\sigma}_t^2
    \end{split}
\end{equation}
where $m$ is the momentum and $(\hat{\mu_t}, \hat{\sigma}_t^2)$ are the current  normalization statistics of different layers of the \gls{dnn}, which we obtain by making one forward pass using the samples in the memory bank.

\subsection{Corruption-Aware Real-Time \gls{dnn} Adaptation}\label{sec:real-time-adaptation}

Adapting the parameters of the current \gls{dnn} in an unsupervised manner usually needs careful selection of hyper-parameters. To avoid this issue, only the sub-network is adapted. As explained earlier, we make one forward pass with the samples in the memory bank and the fixed Gaussian noise to calculate the normalization statistics of the current ongoing corruption and current sub-network fingerprint. The mean of the corruption embedding $\bar{C} = \frac{1}{\mathcal{N}}\sum_{i=1}^{\mathcal{N}} C^i$ of the samples in the memory bank and the sub-network projection $S$ is also calculated. Specifically, the tuneable parameters of the sub-network (shift $\beta$ and scale $\gamma$ parameters of the \gls{bn} layer and final fully connected layers) are updated using gradient descent to minimize the following unsupervised loss:
\begin{equation} \label{eqn:loss_fn}
    \mathcal{L}_u = exp(- \, S \, \cdot \, \bar{C})
\end{equation} 
Notice that we do not assume any access to labeled data, and do not use pseudo-labels (i.e., \gls{dnn} prediction) as labels. Conversely, we minimize the loss between corruption embedding and model state in the latent space which is not disrupted by the distribution of labels in the batch.

\section{Experimental Results} \label{sec:results}

\noindent \textbf{Datasets.} We use the tool described in \cite{michaelis2019benchmarking} to synthetically generate realistic corruptions for CIFAR-10 and CIFAR-100 datasets. Both CIFAR-10 and CIFAR-100 have 50,000 training images and 10,000 testing images. In line with prior work, we use the following 15 different corruptions of different categories which are ``Noise'' (\textit{Gaussian, shot, impulse}), ``Blur'' (\textit{defocus, glass, motion, zoom}), ``Weather'' (\textit{snow, frost, fog, bright}) and ``Digital'' (\textit{contrast, elastic, pixelate, JPEG}) to train our corruption encoder. For fair comparison, we evaluate the performance on 4 corruptions (\textit{Gaussian blur, saturate, spatter, speckle noise}) which are unseen during training phase. In line with \cite{wang2020tent,gong2022note, wang2022continual, zhang2022practical}, we consider the corruptions in their highest severity. \smallskip

%\subsection{Implementation Details}
\vspace{-.2cm}
\subsection{Comparison with State-of-the-Art Benchmarks} \label{sec:baseline}
\cref{fig: cifar10_performance} and \cref{fig: cifar100_performance} show the performance \FW as compared to other state of the art approaches. 
% BN (ICML 2020) \cite{nado2020evaluating}, TENT (ICLR 2021) \cite{wang2020tent}, CoTTA (CVPR 2022) \cite{wang2022continual}, NOTE (NeurIPS 2022) \cite{gong2022note}, RoTTA (CVPR 2023) \cite{yuan2023robust}. 
We show results with ideal \gls{iid} assumption and a more realistic non-\gls{iid} assumption (i.e., samples are correlated). 

As we can observe, \FW performs consistently better in both datasets and across both setups, with \FW improving the performance by 10.4\% on CIFAR-10 test corruptions and 5.7\% on CIFAR-100 compared to the 2nd best performing baseline RoTTA. Among the other considered baselines, TENT, CoTTA and BN achieves poor performance for non-\gls{iid} samples. Since \FW starts from a bootstrapped sub-network in changed corruption domain and update only with samples from our memory bank and corruption latent space, \FW performs consistently well in both cases. Since NOTE  is equipped to handle correlation among online data batches, there is no significant performance drop for non-\gls{iid} assumption. However, NOTE resets the \gls{dnn} after evaluation on each corruption type which is unrealistic as it does not have the capability to know when current corruption domain is changing. Thus, due to error accumulation for continuous adaptation, the performance is fairly poor. 
\FW does not accumulate errors from the previous corruption domain, as in every new corruption domain we start from a new \gls{dnn} state.  

\vspace{-.1 cm}
\subsection{Sensitivity to Correlated Samples}
\vspace{-.1 cm}
The performance of \FW does not depend on the label space  to bootstrap from an appropriate sub-network. To prove this point, we plot the average performance in the first five batches of incoming data by varying the value of the correlation parameter $\delta$ for the CIFAR-100 dataset using sequences similar to those used for \cref{fig: cifar100_performance}. 
From \cref{fig: first n_batch performance} it emerges that the correlation does not have a sensible effect on \FW. Furthermore, as the value of  $\delta$ decreases, the performance of BN and CoTTA drastically decreases, and for the best method RoTTA, performance starts to plummet when encountering severe correlation. Indeed,  with high correlation among samples, there is not enough sample diversity to have a stable update of the \gls{dnn}. However, \FW can reliably sense corruption drift even with a single sample and bootstrap with the most similar sub-network stored in the buffer. Thus, even for extremely correlated samples, we have a well-performing sub-network from the early instances of data batches after incurring in corruption. This indicates the efficacy and reliability of \FW in critical mobile edge computing scenarios.

\begin{figure}[!h]
    \centering
    \includegraphics[width=0.45\textwidth]{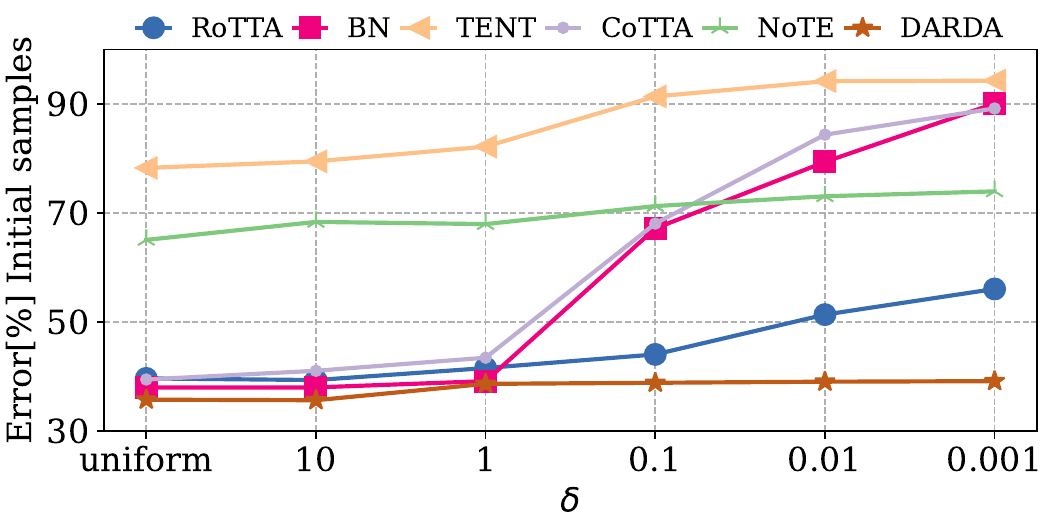} 
    \setlength\abovecaptionskip{-0.05cm}
    \caption{Performance across first five data batches for continuously incurred corruption on  CIFAR-100. \vspace{-0.3cm}}
    \label{fig: first n_batch performance}
\end{figure}

% Other than preventing error accumulation when shifting 
% \FW can reliably sense corruption drift even with a single sample and bootstrap with the most similar sub network.

\subsection{Effect of Batch Size and Dirichlet Parameter}
\vspace{-.1cm}
We can observe from \cref{fig:gamma and batch} that the batch size and Dirichlet parameter do not have a significant effect on performance of \FW. This is because the corruption signature can be extracted even with a single sample. Moreover, by updating the normalization statistics of the \gls{bn} layer using a memory bank that enforces diversity among samples, we ensure that samples are representative of the ongoing corruption. Among other approaches, a higher value of batch size leads to high performance gain for TENT, BN and CoTTA. Especially for CoTTA, with batch size greater than 128, the performance reaches up to RoTTA. Also, the overall effect of Dirichlet parameter $\delta$ is less prominent than the initial batches.

\vspace{-.2cm}

\begin{figure}[h]
    \centering
    \includegraphics[width=0.5\textwidth]{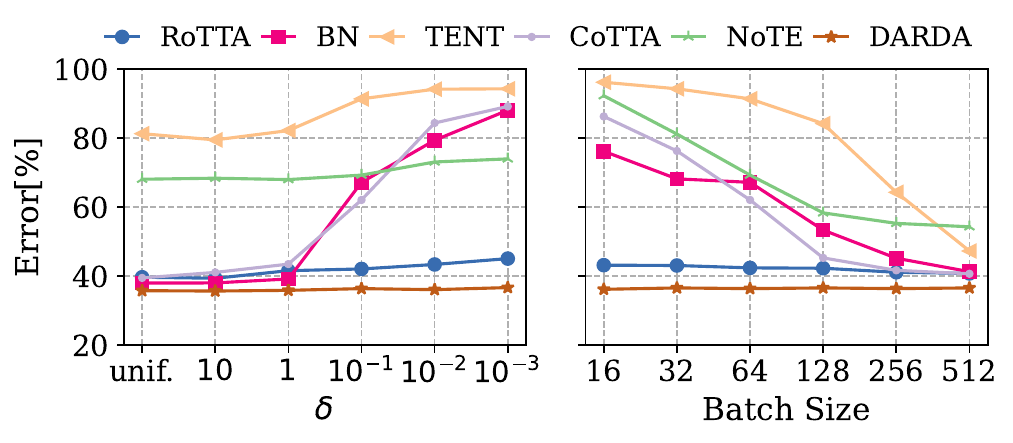} 
    \setlength\abovecaptionskip{-0.5cm}
    \caption{Performance vs correlation coefficient and batch size.\vspace{-.4cm}}
    
    \label{fig:gamma and batch}
\end{figure}
%\vspace{-.4cm}

\begin{figure} [!h]
        \centering
        \begin{subfigure}[b]{0.23\textwidth}
                \centering
                \includegraphics[width=\linewidth]{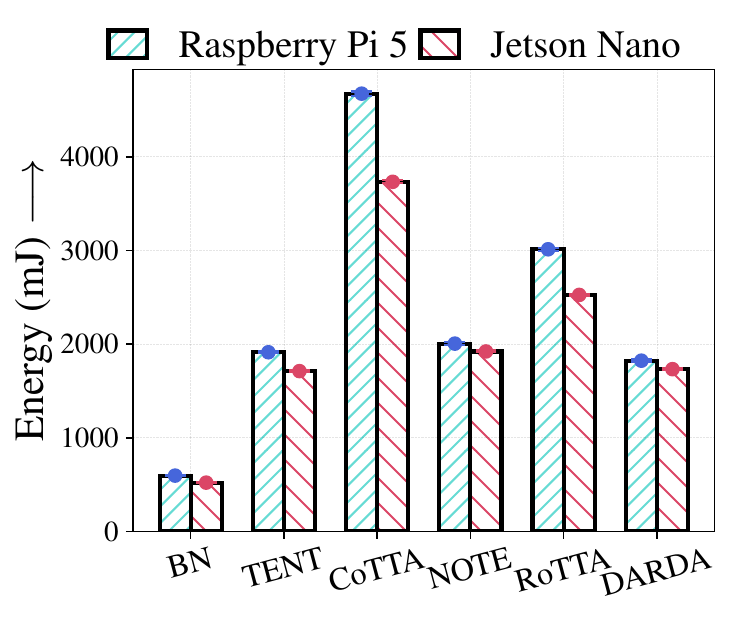}
                \caption{CIFAR-100 }
                \label{fig: energy-cifar}
        \end{subfigure}
        \begin{subfigure}[b]{.23\textwidth}
                \centering
                \includegraphics[width=\linewidth]{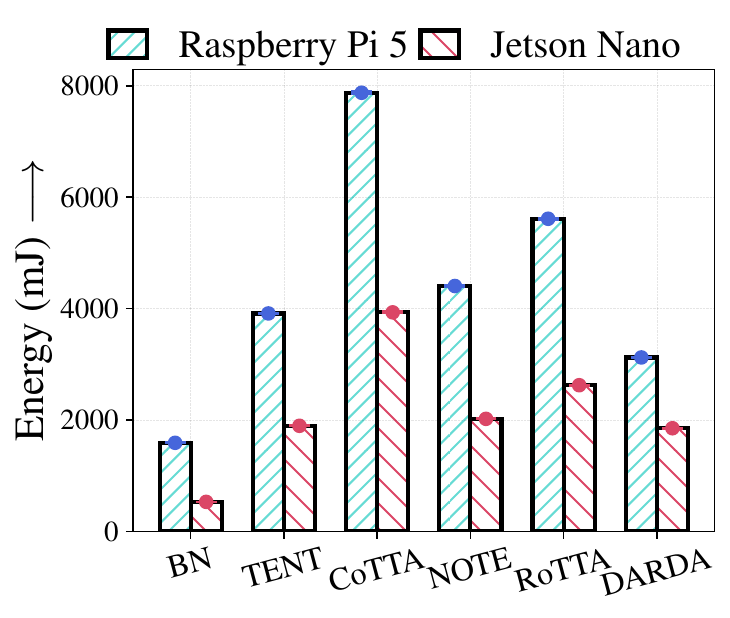}
                \caption{Tiny Imagenet}
                \label{fig: energy-tiny-imagenet}
        \end{subfigure}
         \setlength\abovecaptionskip{-0.05cm}
        \caption{Energy consumption on common edge devices for a batch of 64 data samples. \vspace{-0.3cm}}\label{fig: energy_performance}

\end{figure}

\subsection{Evaluation of Catastrophic Forgetting}
\vspace{-.1cm}
The adaptation of \gls{dnn} should not result in performance degradation on uncorrupted inputs, since in most real-life scenarios uncorrupted data is most common. However, during \gls{tta}, the \gls{dnn} might get specialized in certain corruptions and fail to deliver performance. \cref{fig:forgetting} shows the performance showing 5 uncorrupted data batches from CIFAR-10 after two sequences,  which are (on the left subfigure) saturate $\rightarrow$ Gaussian blur $\rightarrow$ spatter $\rightarrow$ speckle and (on the right subfigure) Gaussian blur $\rightarrow$ saturate $\rightarrow$ spatter $\rightarrow$ speckle. \cref{fig:forgetting} concludes that conversely from \FW, state-of-the-art approaches incur in catastrophic forgetting. The best-performing baseline RoTTA has its accuracy degraded by up to 22\% when uncorrupted data is fed after speckle noise. The classification error is particularly high when uncorrupted data is fed after a more severe corruption type (e.g, Gaussian blur) than the milder one (e.g, spatter).

\begin{figure}[h]
    \centering
    \includegraphics[width=0.5\textwidth]{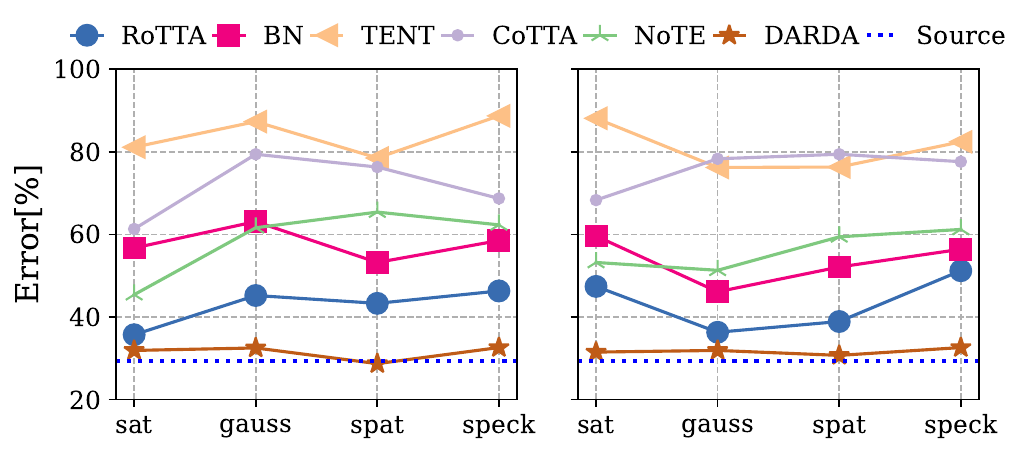} 
    
    \setlength\abovecaptionskip{-0.1cm}
    \caption{Performance when 5 uncorrupted data batches are fed after every adaptation,  for CIFAR-100 and $\delta=0.01$. \vspace{-0.4cm}}
    \label{fig:forgetting}
\end{figure}

\subsection{On-Device Dynamic Adaptation Efficiency}

In this section, we evaluate the on-device performance of \FW in diverse edge platforms. We select commonly available Raspberry-Pi-5 and Nvidia Jetson-Nano, since they are representative of resource-constrained devices widely applied for mobile vision applications. 

% The Raspberry Pi runs a quad-core ARM A76 SoC running at upto 2.4 GHz with 8 GB LPDDR4 memory. Apart from being powered by quad-core ARM Cortex A57 CPU, Jetson-Nano also has hardware acceleration capabilities as it is equipped with 128 core Maxwell GPU. 

% \begin{figure}[h]
% 	\centering
% 	\includegraphics[width=\columnwidth]{fig/power cicuit.pdf} 
% 	\caption{Power Measurement Setup.}
% 	\label{fig: power measurement}

% \end{figure}
% \textbf{Computational Efficiency.}~
\FW adapts using one backward pass to update the sub-network only when a corruption is perceived by the corruption extractor. The corruption extractor and corruption encoder are also active for each data sample. \cref{tab:forward&backward} reports the average number of \gls{mac} operations for the forward pass of each method. For the backward pass, the average number of samples for different corruptions with which the backward pass was called is reported. To calculate \FW forward pass \gls{mac}, all the operations involved in the corruption extractor, corruption encoder and sub-network encoder are summed with the operation performed by different sub-networks. To support continuous adaptation without error accumulation,  CoTTA and RoTTA continuously perform two forward passes with the original data sample and another augmented sample respectively. Thus, their number of operations is two times more than BN, TENT and CoTTA. Although BN, TENT and NOTE have less forward computation than \FW, the unrealistic assumption of \gls{iid} data stream and episodic adaptation  -- notice that the \gls{dnn} state is continuously reset after adaptation -- which makes them not applicable in real-time mobile edge applications. From \cref{fig:latency_performance} it is also evident that \FW incurs lower CPU and GPU latency compared to closest performing benchmarks.

% \textbf{Efficiency of Cache Usage.}~
% A mobile edge device which might request diverse task request during its deployment, its very important that our dynamic adaptation task should consume as low memory as possible while also provide stable performance across corruption distribution shift and changing scenario. Since, there is constant cache memory usage for a particular model and optimizer which scales linearly with parameter increase, we work with the garbage collector module to log the tensor size of intermediate varaibles on average along the adaptation process. 
From \cref{tab:forward&backward}, we can be observe that there is an excessive amount of cache usage during adaptation, except for BN, which can slow down or even block some other tasks we are interested in using the same device. For example, the closest performing baseline RoTTA in continual adaptation settings needs $8.78x$ cache than \FW. Although BN needs less cache usage than \FW to operate as it does not need to store gradient for backward pass, it performs poorly,  as shown in \cref{sec:baseline}.
From \cref{fig: energy_performance}, it is observed that \FW uses $2.9\times$ less energy than the closest performing benchmark in terms of performance. To calculate the energy consumption value, we use the setup in \cref{fig: power measurement}.

% \begin{figure}[!h]
%     \centering
%     \includegraphics[width=\linewidth]{fig/power.pdf} 
%     %\setlength\abovecaptionskip{-0.4cm}
%     \caption{Power Consumption on Edge Devices for CIFAR100 dataset }
%     \label{fig:cifar10_power}
% \end{figure}

\begin{figure} [!h]
        \centering
        \begin{subfigure}[b]{0.23\textwidth}
                \centering
                \includegraphics[width=\linewidth]{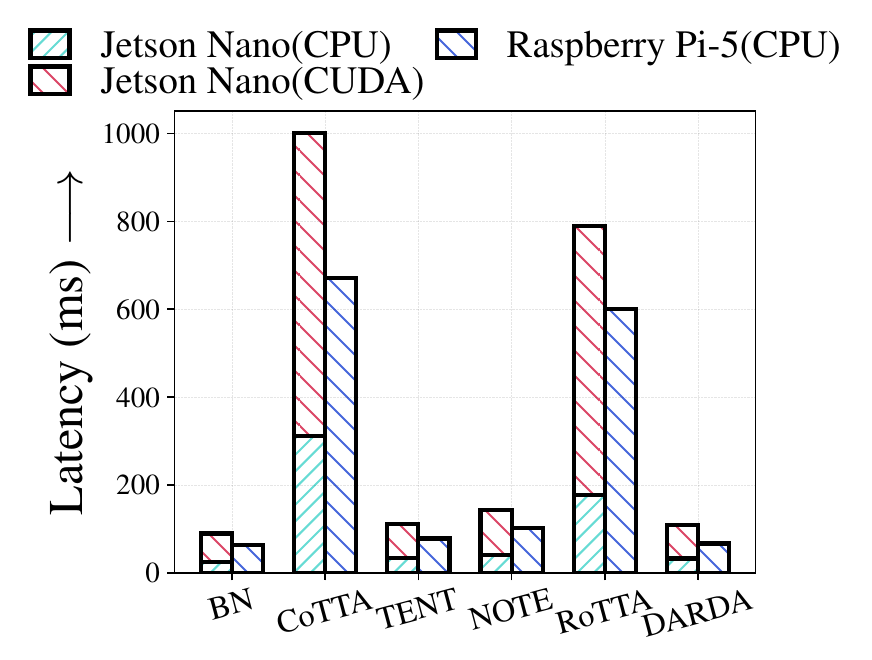}
                \caption{CIFAR-100 }
                \label{fig: latency-cifar}
        \end{subfigure}
        \begin{subfigure}[b]{.23\textwidth}
                \centering
                \includegraphics[width=\linewidth]{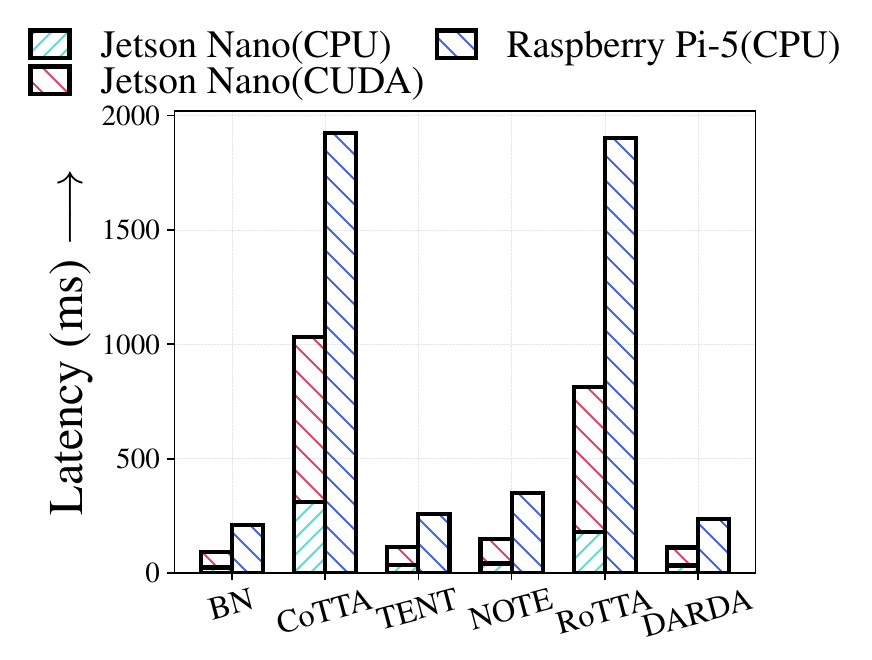}
                \caption{Tiny Imagenet}
                \label{fig: latency-tiny-imagenet}
        \end{subfigure}
         \setlength\abovecaptionskip{-0.05cm}
        \caption{Adaptation Latency on common edge devices for a batch of 64 data samples.\vspace{-0.3cm} }\label{fig:latency_performance}

\end{figure}

\renewcommand{\arraystretch}{0.65}
\begin{table}[t]
\centering
\resizebox{\columnwidth}{!}{\begin{tabular}{cccc}
\toprule
\textbf{Method} & \textbf{Cache(Mb)}  & \begin{tabular}[c]{@{}c@{}}\textbf{Average no. samples} \\ \textbf{Forward MACS}\end{tabular} & \begin{tabular}[c]{@{}c@{}}\textbf{Average no. Samples} \\  \textbf{Backward Pass}\end{tabular} \\ \midrule
BN     & 67       & $1.28 \times 10^{12}$                                                     & 0                                                                           \\ \midrule
TENT   & 914       & $1.28 \times 10^{12}$                                                     & 10,000                                                                      \\ \midrule
CoTTA  & 4271       & $2.56 \times 10^{12}$                                                     & 10,000                                                                      \\ \midrule
NOTE   & 1003     & $1.28 \times 10^{12}$                                                     & 10,000                                                                      \\ \midrule
RoTTA  & 2735      & $2.56 \times 10^{12}$                                                     & 10,000                                                                      \\ \midrule
DARDA  & 312      & $1.82 \times 10^{12}$                                                     & 2294                                                                        \\ \bottomrule

\end{tabular}} 
\caption{Comparison of Computation and Average Cache Usage of \FW  and other approaches while performing continuous adaptation on CIFAR-100. \vspace{-0.5cm}}
\label{tab:forward&backward}
\end{table}

\subsection{Impact of Submodules of \FW} 

To investigate the contribution of different components toward performance gain,  we replace different parts of \FW using different alternative options. We report the related performance in \cref{tab:effect of individual component of darda}. We consider (i) \FW without Corruption Extractor, where  corrupted data is directly projected into latent space; (ii) \FW without context-aware \gls{bn} \& Adaptation, directly using the \gls{dnn} constructed from current sub-network signature for prediction; (iii) \FW with context-aware \gls{bn} \& without fine tuning, we update the normalization values of the \gls{bn} layers using samples from the memory bank but do not update the tunable parameters; (iv) \FW with context-aware \gls{bn} and Entropy Minimization, where we update the tunable parameters by minimizing entropy of predictions. 

From \cref{tab:effect of individual component of darda} it can be observed that the corruption extractor is   crucial for the performance as erroneous corruption projection would select the wrong sub-network. Creating corruption projections using only corrupted data leads to performance drop of 20.4\% and 16.6\% on CIFAR-10 and CIFAR-100 unseen corruption respectively. Context-aware \gls{bn} with adaptation is also important as the accuracy reduces by 6.4\% and 5.1\% respectively for test corruptions on CIFAR-10 and CIFAR-100, respectively. It is an interesting observation that for CIFAR-10, updating the tunable parameters by minimizing entropy of predictions degrades the performance by 0.1\% rather that improving. However, our loss $\mathcal{L}_n$ in \cref{eqn:loss_fn} results up  to 1.2\% performance gain which proves that a learned cross-modal latent space can guide \gls{dnn} adaptation. 

\renewcommand{\arraystretch}{0.9}
\begin{table}[t]
\centering
\normalsize
\begin{tabular}{ccc}
\toprule
\begin{tabular}[c]{@{}c@{}} \textbf{Variants of \FW}\end{tabular}                              & \begin{tabular}[c]{@{}c@{}}\textbf{Error {[}\%{]}} \\\textbf{(CIFAR-10)}\end{tabular} & \begin{tabular}[c]{@{}c@{}} \textbf{Error {[}\%{]}} \\\textbf{(CIFAR-100)}\end{tabular} \\ \midrule
\begin{tabular}[c]{@{}c@{}} w/o Corruption Extractor\end{tabular}                              & 36.7                                                              & 56.9                                                               \\ \midrule
\begin{tabular}[c]{@{}c@{}} w/o Context Aware BN\\ \& Adaptation\end{tabular}                & 22.7                                                              & 41.5                                                               \\ \midrule
\begin{tabular}[c]{@{}c@{}} with Context Aware BN\\  w/o Adaptation\end{tabular}      & 23.8                                                              & 42.9                                                               \\ \midrule
\begin{tabular}[c]{@{}c@{}} with Context Aware BN\\ \& Entropy Minimization\end{tabular} & 23.9                                                              & 42.7                                                               \\ \midrule
Ours                                                                                               & \textbf{16.3}                                                              & \textbf{36.4}                                                               \\ \bottomrule
\end{tabular}
\caption{Effect of Individual Components of \FW.\vspace{-0.5cm}}
\label{tab:effect of individual component of darda}
%\vspace{-1.2 cm}
\end{table}

\section{Conclusion} \label{conclusion}

% Ensuring that sudden and unexpected corruptions (e.g., snowy or foggy conditions) do not compromise the accuracy of \glspl{dnn} is of paramount importance in mobile edge computing. Existing work assumes the availability of corrupted input data for all output classes, which is hardly realistic since inputs are usually classified in a time-sequential fashion. 
\vspace{-.2cm}
In this work, we have proposed  Domain-Aware Real-TimeDynamic Neural Network Adaptation (\FW). \FW adapts the \gls{dnn} to \textit{previously unseen} corruptions in an \emph{unsupervised fashion} by (i) estimating the latent representation of the ongoing corruption; (ii) selecting the sub-network whose associated corruption is the closest in the latent space to the ongoing corruption; and (iii) adapting \gls{dnn} state, so that its representation matches the ongoing corruption. This way, \FW is more resource-efficient and can swiftly adapt to new distributions without requiring a large variety of input data. Through experiments with two popular mobile edge devices -- Raspberry Pi and NVIDIA Jetson Nano --  we show that \FW reduces energy consumption and average cache memory footprint respectively by $1.74\times$ and $2.64\times$ with respect to the state of the art, while increasing the performance by $10.4\%$, $5.7\%$ and $4.4\%$ on CIFAR-10, CIFAR-100 and TinyImagenet. 

\section*{Acknowledgment of Support and Disclaimer}
This work has been funded in part by the National Science Foundation under grants CNS-2134973, ECCS-2229472, CNS-2312875 and ECCS-2329013, by the Air Force Office of Scientific Research under contract number FA9550-23-1-0261, by the Office of Naval Research under award number N00014-23-1-2221, and by the Air Force Research Laboratory via \textit{Open Technology and Agility for Innovation (OTAFI)} under transaction number FA8750-21-9-9000 between SOSSEC, Inc. and the U.S. Government. The U.S. Government is authorized to reproduce and distribute reprints for Governmental purposes notwithstanding any copyright notation thereon. The views and conclusions contained herein are those of the authors and should not be interpreted as necessarily representing the official policies or endorsements, either expressed or implied, of U.S. Air Force, U.S. Navy or the U.S. Government. \vspace{-0.3cm}

%%%%%%%%% REFERENCES
{\small
\bibliographystyle{ieee_fullname}
\bibliography{bibliography,bib-francesco}
}

\clearpage

    \setcounter{section}{0}
    \renewcommand{\thesection}{S\arabic{section}}
    \setcounter{equation}{0}
    \renewcommand{\theequation}{S\arabic{equation}}
    \newcounter{offset}
    \setcounter{offset}{\value{figure}}
    \renewcommand{\thefigure}{S\the\numexpr\value{figure}-\value{offset}\relax}
\section{Different Domain Generalization Setup}
The problem of adapting a \gls{dnn} to tackle real life data corruption at the edge can be formulated by different kind of \gls{dg} settings based on the nature of data stream and learning paradigm. \cref{fig:tta} summarizes the four main \gls{dg} approaches in literature, namely \gls{ft}, \gls{uda}, \gls{sfda} and \gls{tta}. \smallskip

\noindent \textbf{Fine Tuning (FT)} adapts a \gls{dnn} by making it match labeled test data  \cite{lin2023multimodality,gao2022visual}. FT approaches includes Few-Shots Learning (FSL), among others \cite{tian2020predicted}. The downside of FT is that it requires labeled test data and is performed offline, thus they are hardly applicable in a mobile edge computing context. Moreover, it does not take into account samples from the previous domain, so it  incurs in catastrophic forgetting \cite{hayes2020remind}. \smallskip

\noindent \textbf{Unsupervised Domain Adaptation (UDA).}~This approach addresses the issues of FT considering samples from the previous domain and thereby eliminating the need of labels from the new domain \cite{wilson2020survey}. However, as FT, UDA assumes that we can simultaneously access (unlabeled) samples from the current domain and from the prior domain, which is not always the case.  In stark opposition, our goal is achieve real-time adaptation of a \gls{dnn} in dynamic and uncertain scenarios.

\begin{figure}[h]
	\centering
	\includegraphics[width=1\columnwidth]{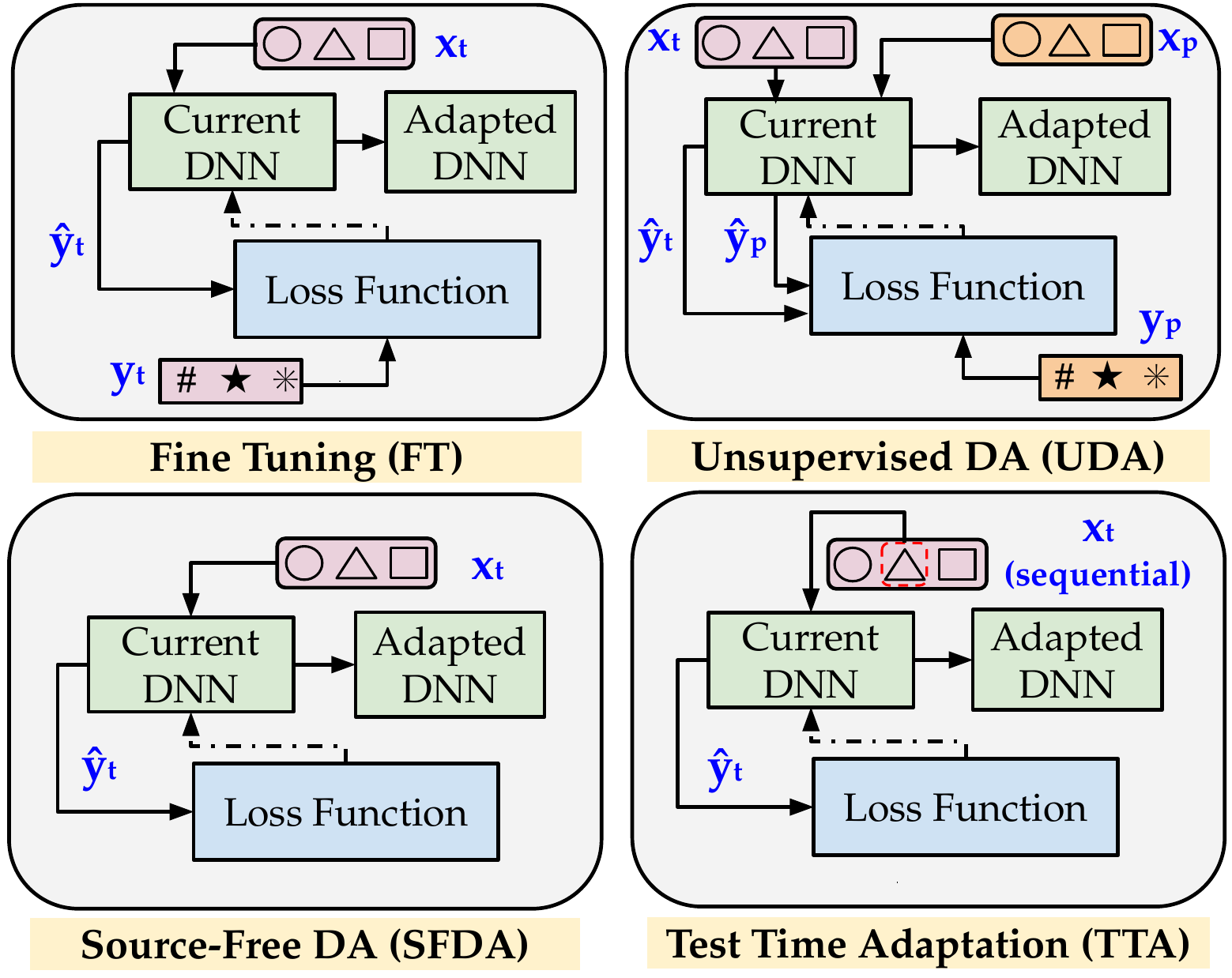} 
	\caption{Domain Generalization (DG) approaches. ($x_t$, $y_t$) indicate the current test sample and its true label, while ($x_p$, $y_p$) indicate  source sample and its true label. $\hat{y}_{t}$ indicates the corresponding predictions by the current \gls{dnn}. We point out that \FW is a TTA approach.}
	\label{fig:tta}

\end{figure}

\noindent \textbf{Source-Free Domain Adaptation (SFDA).}~Conversely from \gls{uda}, in \gls{sfda} the \gls{dnn} adaptation is performed using unlabeled data from the target domain only \cite{li2020model,kurmi2021domain}. While \gls{sfda} approaches take  into account numerous losses for several epochs during optimization, the key downside is that it can hardly be applied in real-time learning settings. \smallskip

\noindent \textbf{Test Time Adaptation (TTA).}~A practical approach to address distributional shifts in real-time settings is \gls{tta}, which utilizes only unlabeled test data (online) to adapt the \gls{dnn}. While existing \gls{tta} approaches improve performance, they are sensitive to the diversity of samples in incoming distributions. For example, even in cases of minor changes in brightness, approaches adapting the normalization layer based on entropy minimization such as \cite{wang2020tent,niu2022efficient} can experience a significant decrease in accuracy, dropping to less than 19\% of accuracy on CIFAR-10. Moreover, existing methods are not aware of domain changes, so they continuously update the \gls{dnn} even in the presence of no domain change. Notice that decoupling the corruption from the features relevant for classification is extremely challenging. Such continuous adaptation  leads to the risk of catastrophic forgetting. While state of the art work \cite{wang2022continual} uses stochastic restoration of parameters to the initial domain to tackle the issue, it needs 78.37\% more storage for ResNet56 architecture than our proposed approach.  

\section{Distribution and Label Shift}
Different corruptions lead mainly to a distribution shift in the input data, which is also widely known as a covariate shift. Distribution shift happens when the distribution of input data changes while the distribution of true labels remain unchanged. In a real-life adaptation of \glspl{dnn} at inference time, we usually have a batch of samples to work with that have both distribution and label shift ( due to correlation in labels in certain scenarios) at the same time. \cref{fig:prior & covrarite shift} illustrates this scenario with an example. This setting is challenging for most existing \gls{tta} algorithms, but is more practical. 
\begin{figure}[t]
	\centering
	\includegraphics[width=\columnwidth]{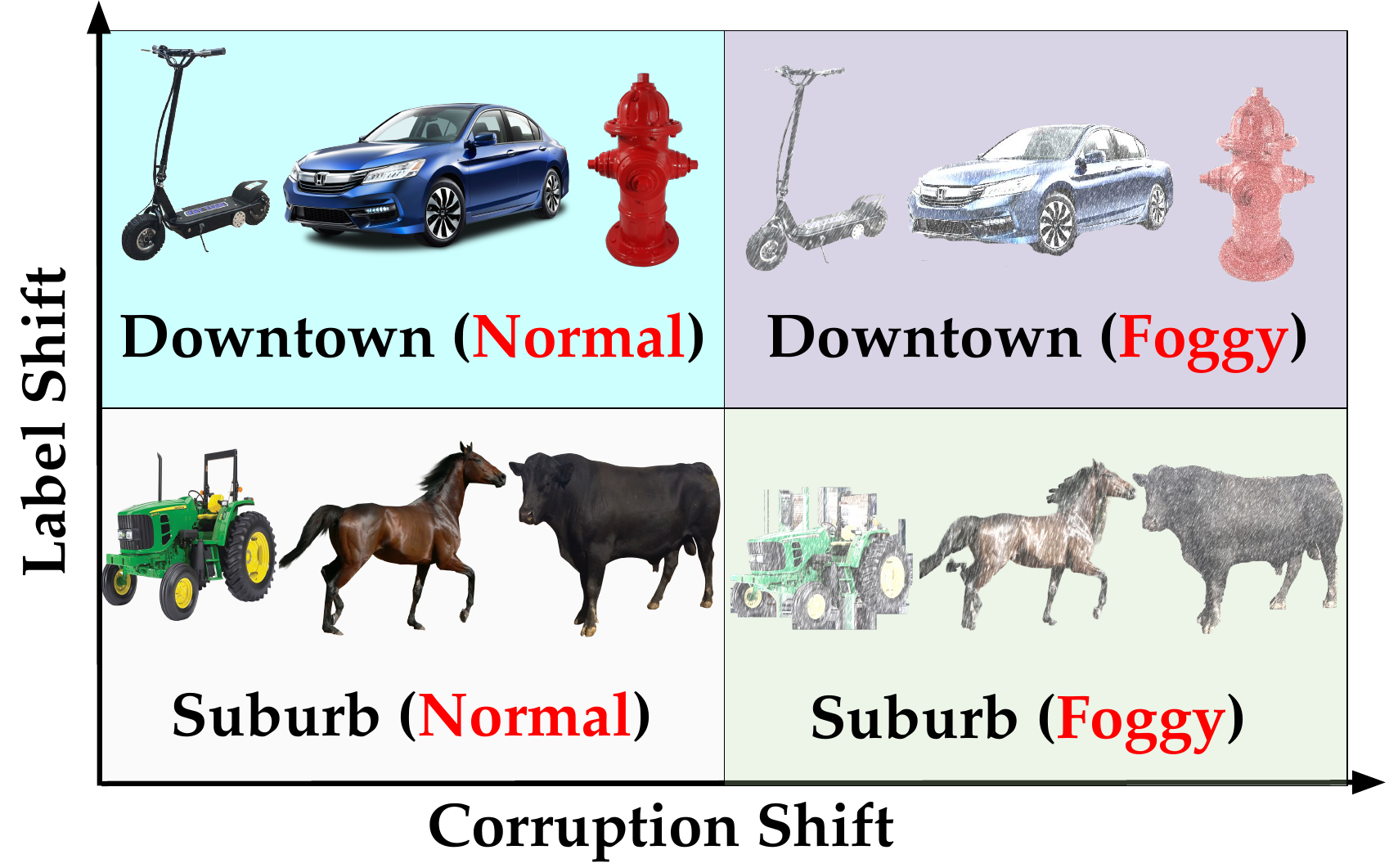} 
	\caption{Example of Label Distribution \& Corruption Shift.}
	\label{fig:prior & covrarite shift}

\end{figure}

\section{Hyperparameters and Implementation Details.}
We implement \FW with the PyTorch framework. For generating non \gls{iid} real time data flow, we adopted Dirichlet distribution (with parameter $\delta$) to create a non-\gls{iid} data flow. Furthermore, to simulate a domain change due to corruption, we feed samples from different test corruption types sequentially one after another following Dirichlet distribution with control parameter $\delta$, when all samples from the current test corruption are exhausted. With lower values of the Dirichlet parameter $\delta$, there is less diversity among online data batches, thus less correlation. The noise extractor is designed in a lightweight manner with three convolution layer with Leaky-ReLU non-linearity stacked sequentially. For the noise encoder, we use a sequential model consisting of two convolution units each consisting of a single convolution layer with ReLU non-linearity and a MaxPooling layer. The sequential model is followed by two dense layers. The sub-network encoder consists of two dense layers with ReLU non-linearity. For the reported results, we choose the output dimension of the sub-network encoder and the corruption encoder, i.e. the dimension of latent space to be 128 and the batch size is kept at 64, while  the size of the memory bank is kept the same as batch size. 

For CIFAR-100 corrupted dataset, we cannot insert samples from all classes in the memory bank. The parameter value $\delta=0.1$ is considered across all test scenarios unless otherwise specified. However,  we have found  empirically that presence of representative samples from the majority of the classes is sufficient.  We use Adam optimizer with learning rate $1 \times 10^3$ to perform the adaptation. To generate the sub-network signature, we first train a ResNet-56 backbone on the uncorrupted training dataset (CIFAR-10 and CIFAR-100) and fine-tune the sub-networks for 20 epochs using data from 15 train corruption domain to create the 15 sub-networks and their related signatures. For the hyper parameters, we assume a fixed set of values throughout the experiments, which are $\lambda_r =0.2$, $\lambda_e = 10$, $\varphi_{thresh}=0.005$ and momentum value $m=0.5$. As we make one step upgrade of the current \gls{bn} layer statistics by making sure that the samples of the memory banks are reliable, we chose a rather aggressive momentum value to weigh the normalization of current test samples highly.

\section{Power Measurement Setup}
As both Raspberry Pi and Jetson Nano do not have a system integrated in them to measure power at a certain instance, we use the setup of \cref{fig: power measurement} to calculate the energy consumption of different adaptation methods. The ina219 IC can provide accurate power consumption for a device at a certain time. We initially take some samples of power measurement to estimate the idle power usage of the device. Then for the whole adaptation period of each algorithm, we take samples of power drawn by the device every $10 ms$. Multiplying with the sampling time and averaging over batches of data we get a very good estimate of the energy consumption of different adaptation algorithms.  

\begin{figure}[h]
	\centering
	\includegraphics[width=\columnwidth]{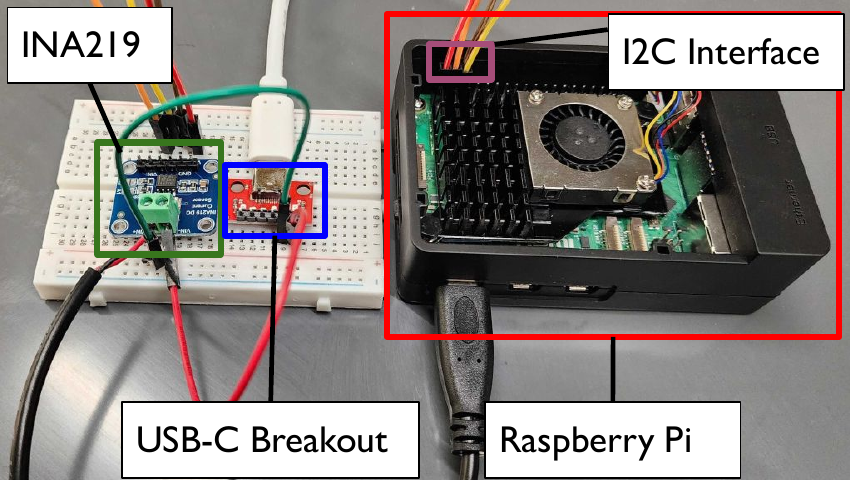} 
	\caption{Power Measurement Setup.}
	\label{fig: power measurement}

\end{figure}

\section{Performance of Corruption Extractor}

As discussed previously, in a corrupted data sample the corruption related features are intertwined with label information which impedes extraction of contextual information from data. 
\begin{algorithm} [t]
\caption{Memory Bank Construction}\label{alg:memory bank}
\begin{algorithmic}
\STATE {\bfseries Input:} A test sample $x^t$; and associated corruption embedding $C^t$;
current corruption projection $C^{curr}$

\STATE {\bfseries Define:} \hspace*{0.1cm} memory bank $\mathcal{M}$; total capacity $\mathcal{N}$;\\ class distribution $n[y]$, where $ y\in Y$; 
total occupancy $\mathcal{O}c$

\STATE Calculate $\hat{y} = \mbox{arg max}_y \,f_{\theta}^{'}(y\mid x^t)$
\IF {$n[\hat{y}] < \lceil \frac{\mathcal{N}}{\mid{Y}\mid}\rceil \,and \,\mathcal{O}c < \mid \mathcal{N} \mid$}

\STATE Add $(x^t,C^t)$ to $\mathcal{M}$ \\
\ELSE
    \STATE Calculate cosine similarity $f_{sim}$ of corruption \\projection among instances in $\left \{ n[\hat{y}] \cup x^t\right \}$ and $\mathcal{C}^{curr} $\\
    \STATE Find instance $(\tilde{x},\tilde{C})$ in $n[\hat{y}]$ with the lowest similarity $\mbox{arg min}_{x\in n[\hat{y}]} \,f_{sim}(x, C^{curr})$ to current signature \\
\ENDIF
\IF {$f_{sim}(\tilde{x}) > f_{sim}(x^t)$}
    \STATE {Discard $(x^t, C^t)$}
\ELSE
    \STATE {Remove instance $(\tilde{x},\tilde{C})$ from $\mathcal{M}$\\
    \STATE Add $(x^t, C^t)$ to $\mathcal{M}$}
\ENDIF
\end{algorithmic}
\end{algorithm}

To verify whether the corruption extractor is effective in extracting corruption information from data, we trained our corruption extractor and encoder using data available from 15 different  corruptions and evaluate how it performs for unknown corruption and different severity. \cref{fig:corruption tsne} shows the t-distributed stochastic neighbor embedding (t-SNE) of the projections in the latent space of data from different corruption domains and different levels of severity.

In \cref{fig: Encoding image(low severity)} and \cref{fig: Encoding image(high severity)}, the corruption projections are made directly using the input data. \textbf{Here, we can observe that directly encoding the corrupted data does not produce clusters with tight boundaries.} Interestingly, \cref{fig: Encoding corruption_residual(low severity)} and \cref{fig: Encoding corruption_residual(high severity)} point out that projecting the corruption signature extracted from the corrupted data does produce significantly better clusters. Moreover, although samples from the ``spatter'' and ``saturate'' have an overlapping boundary in latent space, from \cref{table:Noise Type Comparison} \textbf{we can see that although they are visually dissimilar, a subnetwork that performs well for the ``spatter'' also works well for ``saturate''}. This means that \FW is effective not only in categorizing and mapping corruptions, but also in designing appropriate subnetworks. Moreover, \cref{fig: Encoding corruption_residual(high severity)} shows that the corruption extractor produces an even better cluster with higher severity data from unknown corruption domain even if it was trained on a lower corruption severity. This proves the intuition that the corruption extractor extracts corruption information rather than simply over-fitting to the joint distribution of data and corruption. 

\renewcommand{\arraystretch}{0.2}
\begin{table*}[!ht]
\centering
\normalsize
%\begin{tabular}{ m{3cm} m{3cm} m{3cm} m{3cm} }
\begin{tabular}{ c c c c c }
%\cline{2-5}
%& Image Samples & Layer 13 & Layer 12 & Layer 11 & Layer 10 & Layer 9 & Layer 8 & Layer 7 & Layer 6 \\
%\hline
\toprule
\textbf{True Corruption Class} & \textbf{Spatter} & \textbf{Gaussian Blur} & \textbf{Speckle Noise} & \textbf{Saturate}
\\
\midrule
%\cline{2-5}
Data Sample &
\begin{minipage}{.15\textwidth}
    %\vspace{0.02\textwidth}
      \includegraphics[width=\linewidth, height=20mm]{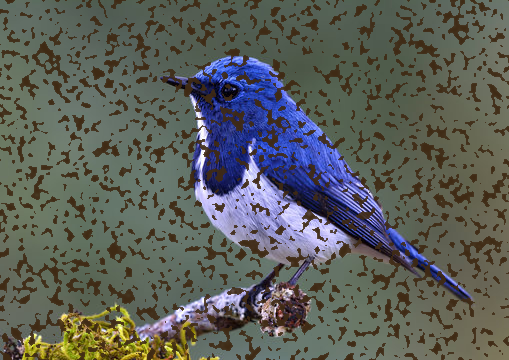}
\end{minipage}
&
\begin{minipage}{.15\textwidth}
    %\vspace{0.02\textwidth}
      \includegraphics[width=\linewidth, height=20mm]{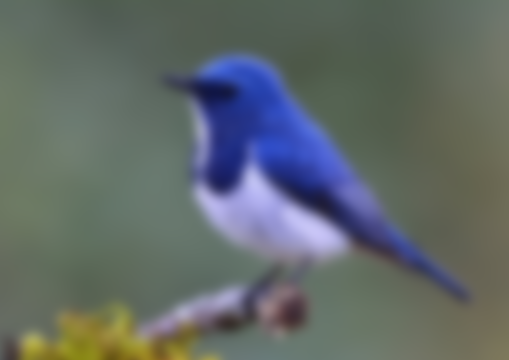}
\end{minipage}
&
\begin{minipage}{.15\textwidth}
    %\vspace{0.02\textwidth}
      \includegraphics[width=\linewidth, height=20mm]{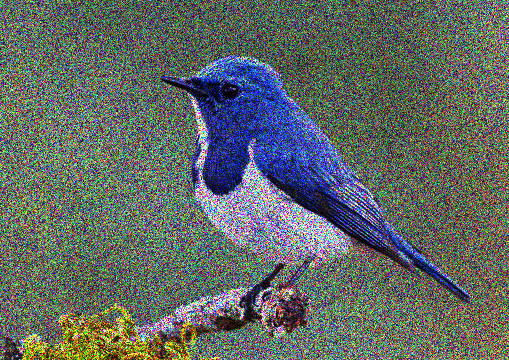}
\end{minipage}
&
\begin{minipage}{.15\textwidth}
    %\vspace{0.02\textwidth}
      \includegraphics[width=\linewidth, height=20mm]{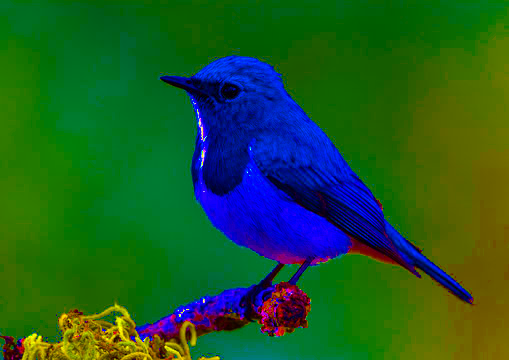}
\end{minipage}
\\
%\cline{2-5}
\midrule
JPEG Compression&84\% & 64.3\% & 78.1\% & 84.8\%
\\
%\cline{2-5}
\midrule
Glass Blur &72.5\% & 79.4\% & 56\% & 74\%
\\
%\cline{2-5}
\midrule
Shot Noise & 78\% & 39.4\% & 83.8\% & 75.8\%
\\
%\cline{2-5}
\midrule
Brightness & 83.7\% & 52.8\% & 66.7\% & 85.6\% \\
\bottomrule
\end{tabular}

\caption{Performance (accuracy on CIFAR-10) comparison of the sub-network signature closest to the unknown corruption signatures in the latent space. The left most column indicates the corresponding sub-network signatures. The unknown corruption domain and its closest sub-network signatures from latent space are: Spatter $ \rightarrow $ JPEG Compression, Gaussian Blur $\rightarrow$ Glass Blur, Speckle Noise $\rightarrow$ Shot Noise, Saturate $\rightarrow$ Brightness.}

\label{table:Noise Type Comparison}
\end{table*}

\section{Memory Bank Construction Process}
The memory bank construction process is described in detail in \cref{alg:memory bank}.

\begin{figure*} [t]
        \centering
        \begin{subfigure}[b]{.22\textwidth}
                \centering
                \includegraphics[width=\linewidth]{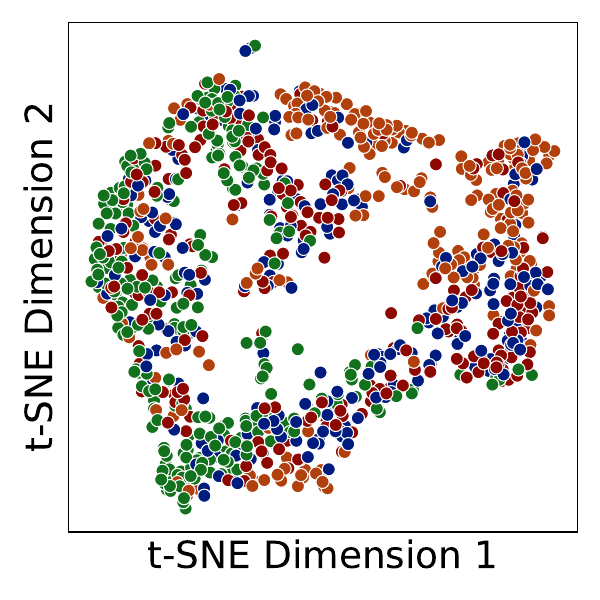}
                \setlength\abovecaptionskip{-0.5cm}
                \caption{Projection of Corrupted Data (Low Severity)}
                \label{fig: Encoding image(low severity)}
        \end{subfigure}
        \begin{subfigure}[b]{.22\textwidth}
                \centering
                \includegraphics[width=\linewidth]{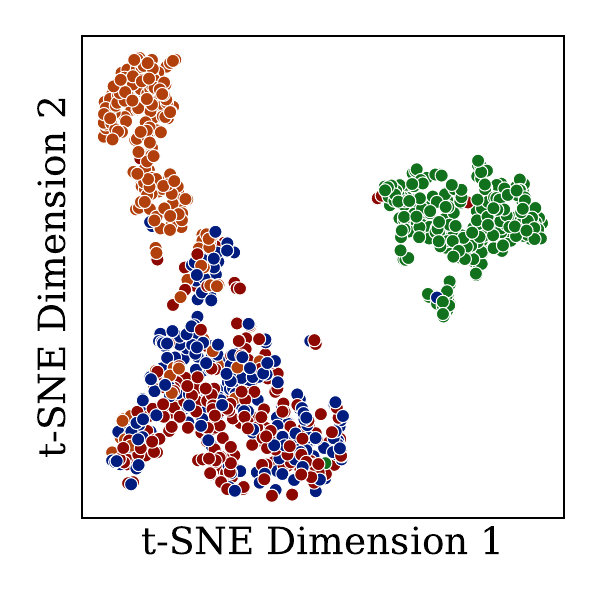}
                \setlength\abovecaptionskip{-0.5cm}
                \caption{Projection of Extracted Corruption (Low Severity)}
                \label{fig: Encoding corruption_residual(low severity)}
        \end{subfigure}
         \begin{subfigure}[b]{.22\textwidth}
                \centering
                \includegraphics[width=\linewidth]{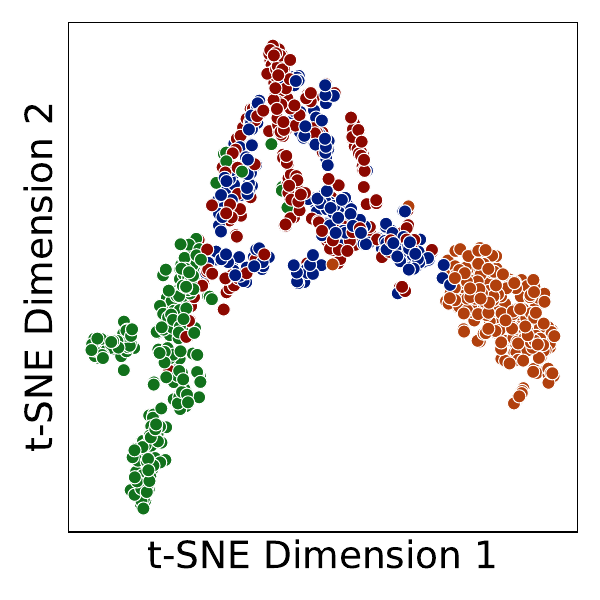}
                \setlength\abovecaptionskip{-0.5cm}
                \caption{Projection of Corrupted Data (High Severity)}
                \label{fig: Encoding image(high severity)}
        \end{subfigure}
         \begin{subfigure}[b]{.22\textwidth}
                \centering
                \includegraphics[width=\linewidth]{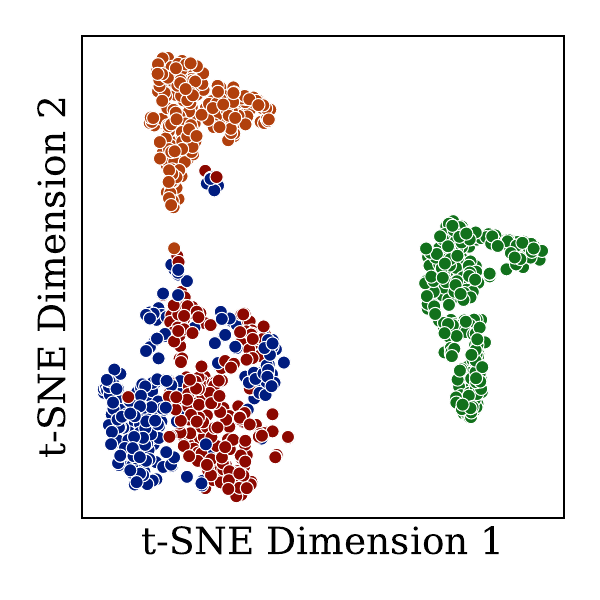}
                 \setlength\abovecaptionskip{-0.5cm}
                \caption{Projection of Extracted Corruption (High Severity)}
                \label{fig: Encoding corruption_residual(high severity)}
        \end{subfigure}
         \setlength\abovecaptionskip{-0.01cm}
        \caption{t-distributed stochastic neighbor embedding (t-SNE) of different samples in latent space for CIFAR-10. Here the green, orange, blue, and red colors indicate Gaussian blur, speckle noise, and saturate; respectively. }\label{fig:sample_performance}
        \label{fig:corruption tsne}

\end{figure*}
\section{More Details of Corruption Encoder}
The processes involved both in the training and inference phase of the proposed Corruption Encoder is illustrated through \cref{alg:corruption-encoder}.
\begin{algorithm}[tb]
\caption{Corruption Encoder} \label{alg:corruption-encoder}
\begin{algorithmic}
    
\STATE {\bfseries Input:} dataset $\mathcal{D}^d$; epochs $E$; batch size $N$;  constant $\tau$ and $\lambda_e$; embed dimension $o$; pair downsampler $\mathcal{G}(.)$ ;
transformation set $\mathcal{T}$; structure of $g(.)$ and $h(.)$

\STATE {\bfseries Output:} projection vector into corruption latent space \COMMENT{training} 

\FOR{$epoch = 1$ {\bfseries to} $E$}
\STATE sample $\left \{ x_i,D^i \right \}_{i=1}^{N}$ 
\STATE sample two augmentations $\mathcal{T}^a$, $\mathcal{T}^b \sim  \mathcal{T}$ \\
\STATE generate two down sampled data\\
\STATE $\mathbf{G}_1(x_i)$, $\mathbf{G}_2(x_i) = \mathcal{G}(x_i)$ \\
\STATE calculate $\tilde{\mathbf{G}_1}(x_i)$, $\tilde{\mathbf{G}_2}(x_i)$ using \cref{eqn:g(x)} \\
\STATE calculate $\mathcal{L}_n$ using \cref{eqn:Ln} \\
\STATE generate $x_i^a$, $x_i^b$ = $\mathcal{T}^a({x_i})$,$\mathcal{T}^b({x_i})$ \\
\STATE generate $\mathbf{G}_1(x_i^a)$, $\mathbf{G}_2(x_i^a) = \mathcal{G}(x_i^a)$ and \\
\STATE $\mathbf{G}_1(x_i^b)$, $\mathbf{G}_2(x_i^b) = \mathcal{G}(x_i^b)$ 
\STATE extract $x_i$'s corruption $x_{res_i}$, by concatenating \\ 
\STATE $x_{res_i}$ = $g((\mathbf{G}_1(x_i)) \:\: || \:\: g(\mathbf{G}_2(x_i))$ 
\STATE calculate the latent space projections $C^i$ by \\ $C^i = h(x_{{res}_i})$ 
\STATE calculate $\mathcal{L}_D$ using  \cref{eqn:supcon} \\
\STATE calculate overall loss using \cref{eq:training_corruption_encoder} \\
\STATE update $g(.)$ and $h(.)$ to minimize $\mathcal{L}$
\ENDFOR
\COMMENT{test}

\FOR{ x {\bfseries in} $\mathcal{D}^u$} 

\STATE generate $\mathbf{G}_1(x)$, $\mathbf{G}_2(x) = \mathcal{G}(x)$ \\
\STATE calculate and concatenate corruption features \\
\STATE  $x_{res}$ = $g(\mathbf{G}_1(x)) || g(\mathbf{G}_2(x))$\\
\STATE  calculate projection into latent space by \\
$C =h(x_{res})$
\ENDFOR
\end{algorithmic}
\end{algorithm}

\subsection{Theoretical Analysis of the Corruption Encoder}
We offer insights into the information learned by the corruption encoding process through a theoretical analysis of the corruption encoder for additive noise.
\begin{prop} \label{prop:prop_1}
Let, two noisy observation from original sample x be $y_1 = G_1(x)$ and $y_2 = G_2(x)$. Assuming zero mean and independent additive noise; $y_1 = x + e_1$ and $y_1 = x + e_1$ ; where noise is denoted by $e_i$. If $e_i$ can be approximated by $g(.): e_i = g_{\phi}(y_i)$ by minimizing MSE loss between clean observation $x_i$ and noisy observation $y_i$, minimizing the loss between two noisy observation approximate the same thing.
\end{prop}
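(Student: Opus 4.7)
The plan is to show that the unsupervised loss actually optimized by the corruption extractor differs from the (unavailable) supervised loss only by a term that does not depend on the learnable parameters $\phi$. First, I would write out the supervised MSE target explicitly: since the ideal objective would be to recover the clean signal $x$ from $y_1 = x + e_1$, the supervised loss is
\begin{equation*}
\mathcal{L}_{\text{sup}}(\phi) = \mathbb{E}\bigl[\|\,y_1 - g_\phi(y_1) - x\,\|_2^2\bigr] = \mathbb{E}\bigl[\|\,e_1 - g_\phi(y_1)\,\|_2^2\bigr],
\end{equation*}
which is the MSE between the predicted residual and the true noise $e_1$.

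Next, I would write down the unsupervised loss implicit in the construction $\tilde{G}_2(x) = G_1(x) - g_\phi(G_1(x))$ used in the paper, namely
\begin{equation*}
\mathcal{L}_{\text{unsup}}(\phi) = \mathbb{E}\bigl[\|\,y_1 - g_\phi(y_1) - y_2\,\|_2^2\bigr] = \mathbb{E}\bigl[\|\,e_1 - g_\phi(y_1) - e_2\,\|_2^2\bigr].
\end{equation*}
Expanding the square gives three terms: $\|e_1 - g_\phi(y_1)\|_2^2$, $-2\,(e_1 - g_\phi(y_1))^\top e_2$, and $\|e_2\|_2^2$.

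The key step is handling the cross term. Since $e_2$ is zero-mean and, by the independence assumption, independent of both $e_1$ and $y_1$ (hence of $g_\phi(y_1)$), the expectation of the cross term factors and vanishes:
\begin{equation*}
\mathbb{E}\bigl[(e_1 - g_\phi(y_1))^\top e_2\bigr] = \mathbb{E}\bigl[e_1 - g_\phi(y_1)\bigr]^\top \mathbb{E}[e_2] = 0.
\end{equation*}
The remaining term $\mathbb{E}[\|e_2\|_2^2]$ is a constant in $\phi$. Therefore $\mathcal{L}_{\text{unsup}}(\phi) = \mathcal{L}_{\text{sup}}(\phi) + \text{const}$, so the two objectives share the same minimizers, establishing the claim. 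I would conclude by pointing out the symmetric argument for the mirror term $\tilde{G}_1(x) = G_2(x) - g_\phi(G_2(x))$, so that the full loss in \cref{eqn:Ln} inherits the same equivalence.

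The main obstacle — really the only subtle point — is justifying the independence used to kill the cross term: $g_\phi(y_1)$ depends on $y_1 = x + e_1$, so independence of $e_2$ from both $x$ and $e_1$ is essential and should be stated as part of the hypothesis. The zero-mean assumption on $e_2$ is equally indispensable; without it a bias term $\mathbb{E}[e_2]$ would remain and shift the optimum away from the supervised one. Once these two conditions are in place, the argument reduces to a short expansion-and-expectation calculation.
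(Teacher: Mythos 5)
Your proposal is correct and follows essentially the same route as the paper's proof: both expand the squared error with $y_2 = x + e_2$ and kill the cross term $\mathbb{E}\bigl[e_2^\top g_\phi(y_1)\bigr]$ using the zero-mean and independence assumptions, leaving objectives that agree up to a $\phi$-independent constant. Your version is slightly more explicit in isolating that constant (and in flagging that independence of $e_2$ from $y_1$, not just from $e_1$, is what is really needed), but the decomposition and key step are the same.
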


\begin{proof}
\begin{align*}
    g(y_1,x ; \phi) &= \operatorname*{argmin}_\phi \mathbb{E} \left[ \left \| y_1 - g_{\phi}(y_1)-x \right \|_2^2 \right] \\
    &= \operatorname*{argmin}_\phi \mathbb{E}\left [ \left \| g_{\phi}(y_1) \right \|_2^2 -2y_1^Tg_{\phi}(y_1)+2x^Tg_{\phi}(y_1) \right ] \\
    g(y_1,y_2 ; \phi) &= \operatorname*{argmin}_\phi \mathbb{E} \left[ \left \| y_1 - g_{\phi}(y_1)-y_2 \right \|_2^2 \right] \\
    &= \operatorname*{argmin}_\phi \mathbb{E} \left[ \left \| y_1 - g_{\phi}(y_1)-x -e_2 \right \|_2^2 \right] \\
    &= \operatorname*{argmin}_\phi \mathbb{E} [ \left \| g_{\phi}(y_1) \right \|_2^2 -2y_1^Tg_{\phi}(y_1)+2x^Tg_{\phi}(y_1) \\
    &\quad+ 2e_2^Tg_{\phi}(y_1) ] \\ 
    &= \operatorname*{argmin}_\phi \mathbb{E}\left [ \left \| g_{\phi}(y_1) \right \|_2^2 -2y_1^Tg_{\phi}(y_1)+2x^Tg_{\phi}(y_1) \right ] \\
    &= g(y_1,x ; \phi)
\end{align*}
Assuming zero mean $\mathbf{E}(e_i)=0$ and independent noise the second to last equality is satisfied 
\end{proof}
\end{document}